\documentclass[11pt,a4paper,twocolumn]{article}

\usepackage[a4paper,top=1in,bottom=1in,left=1in,right=1in]{geometry}
\usepackage{times}
\usepackage[T1]{fontenc}
\usepackage[utf8]{inputenc}
\usepackage{microtype}
\usepackage{amsmath,amssymb,amsthm}
\usepackage{graphicx}
\usepackage{booktabs}
\usepackage{algorithm}
\usepackage{algpseudocode}
\usepackage[round]{natbib}
\usepackage{caption}
\usepackage{url}
\usepackage[colorlinks=true,linkcolor=blue,citecolor=blue,urlcolor=blue]{hyperref}
\usepackage{titlesec}
\titleformat{\section}{\large\bfseries}{\thesection}{1em}{}
\titleformat{\subsection}{\normalsize\bfseries}{\thesubsection}{1em}{}
\titlespacing*{\section}{0pt}{1.6ex plus .6ex minus .2ex}{0.9ex plus .2ex}
\titlespacing*{\subsection}{0pt}{1.3ex plus .5ex minus .2ex}{0.7ex plus .2ex}
\graphicspath{{figures/}}

\newtheorem{proposition}{Proposition}
\newtheorem{corollary}{Corollary}
\newtheorem{definition}{Definition}

\newcommand{\E}{\mathbb{E}}
\newcommand{\Corr}{\mathrm{Corr}}
\newcommand{\bfpara}[1]{\noindent\textbf{#1}}

\title{\textbf{Proof-Carrying Cognition:\\
Closing the Verification Gap with Reality-Settled Reward}}

\author{%
  \begin{tabular}{c}
    \textbf{Eshwar Reddy M} \\
    AI Engineer, Testsigma \\
    University of San Diego \\
    \texttt{malireddy.eshwar@gmail.com}
  \end{tabular}
  \hspace{2.5em}
  \begin{tabular}{c}
    \textbf{Sourav Karmakar} \\
    Senior AI Scientist, Intuit India \\
    \\
    \texttt{souravkarmakar29@gmail.com}
  \end{tabular}
}
\date{}

\begin{document}
\maketitle

\begin{abstract}
Recent frontier progress in large language models has come predominantly from reinforcement learning on reasoning traces---and has been sharply concentrated in domains that possess a cheap, sound verifier. We argue that the field's binding constraint is therefore the \emph{verification gap}: the absence of a scalable, incorruptible source of reward for reasoning outside narrowly formal domains. We make four contributions. \textbf{(1) Theory:} in a joint-Gaussian model of best-of-$N$ selection we prove that verifier--gold correlation $\rho$ is the exact exchange rate between test-time compute and capability, and that an unsound verifier must pay a polynomial compute penalty $N^{1/\rho^2}$ to match a sound one. \textbf{(2) Demonstration:} in program-synthesis testbeds with an executable ground truth---a minimal six-token domain and a pre-registered scaled replication in a $\sim\!10^{10}$-program domain with 2--4$\times$ more tasks and stronger instruments---we show that unsound verifiers lose Soundness-under-Pressure as optimization pressure grows (from 0.94 to 0.32 for a shallow verifier at $N{=}4096$), collapsing absolutely in the small domain and plateauing in the rich one, while a sound verifier improves monotonically; that reality-anchored settlement beats a frozen verifier under both i.i.d.\ and directed adversarial pressure, including against an adversary that explicitly models the settlement process, driving the hacking gap from $\sim\!0.27$ to $\sim\!0$; that soundness scales log-linearly with settled labels, with on-policy settlement $\sim\!10\times$ more label-efficient than random labeling and decisively better than an uncertainty-sampling control; and that settlement improves the \emph{resolution} of claims, not merely their calibration. The replication also falsified two of our quantitative claims, which we report and repair: outside the Gaussian model the exchange rate $\rho$ is not quantitatively predictive, and the closed form $N^{1/\rho^2}$ overstates the penalty at practical $N$ (the exact finite-$N$ form of Proposition~\ref{prop:exchange} predicts matching budgets to within 3.4\%). \textbf{(2b) Real-model evidence:} in pre-registered experiments with a real frontier-family generator, real LLM judges, and real unit-test execution as gold, the weak judge loses soundness under best-of-$N$ on both MBPP and HumanEval ($p<10^{-3}$); a stronger judge is significantly more robust---so the gap binds \emph{conditionally} on judge capability relative to task difficulty; a learned settlement model anchored to executed outcomes cuts the judge's pricing error by 59\%, while naive in-context anchoring makes it worse; and a prompt-level LLM adversary failed to inflate the judge---while mining the same candidate banks shows selection alone manufactures $+0.53$ hacking gaps from honest samples: the operative Goodhart pressure is selection, not persuasion. A pre-registered diagnosis of every failure yields a repaired, margin-free exchange-rate law (the copula form predicts realized soundness of real LLM judges to 4\% median error) and a design constraint for the paradigm: average-calibration anchoring provably cannot move soundness and empirically amplifies tail deceptions---the world model must match the expressiveness of the deception surface. \textbf{(3) Paradigm:} we propose \emph{proof-carrying cognition}, a training loop in which reasoning steps are emitted as typed probabilistic claims, priced by a self-built world model whose only loss is prediction of held-out reality, and settled by strictly proper scoring rules---making reality, rather than human judgment, the ultimate reward function. \textbf{(4) Benchmark:} we specify Soundness-under-Pressure, the headline metric for a reality-settled reasoning benchmark, and argue that building this benchmark is the field's single most important near-term action. \textbf{(2c)} Under real GRPO training, a frozen learned RM traces the full overoptimization curve---proxy reward climbs while executed reward collapses by 90\%---and the identical RM refit on a 10\% settlement stream preserves executed reward at 6$\times$ the frozen arm's and outperforms an equal-budget LLM-judge-updated control, isolating reality as the label source; the registered drift-alarm criterion failed its first test and is reported as such.
\end{abstract}

\section{Introduction}

Two inventions define modern machine learning: backpropagation, which solved credit assignment over parameters \citep{rumelhart1986learning}, and the transformer, which solved bandwidth over context \citep{vaswani2017attention}. We argue that the next invention of comparable consequence must solve \emph{credit assignment over thoughts}: determining, scalably and incorruptibly, whether a step of reasoning is correct.

The argument proceeds from an empirical observation. Frontier capability gains now come predominantly from reinforcement learning on chains of thought \citep{wei2022chain,openai2024learning,guo2025deepseek}, and these gains are sharply concentrated in domains with a cheap, sound verifier: mathematics with checkable answers \citep{hendrycks2021measuring}, code with executable tests \citep{chen2021evaluating}, and formal proof with a kernel \citep{trinh2024solving,alphaproof2024}. The underlying recipe---sample many traces, keep those a verifier accepts, reinforce \citep{zelikman2022star}---is a self-improvement engine whose throttle is verifier coverage. Where the verifier is sound, months of training have produced superhuman-adjacent performance. Where the only reward is a human preference label \citep{christiano2017deep,ouyang2022training} or an LLM judge, the same models plateau at fluent but unaccountable reasoning, because preference-based reward is noisy, gameable, and shallower than the reasoning it evaluates \citep{gao2023scaling}.

This paper names that constraint the \emph{verification gap} and attacks it from four directions:
\begin{itemize}\setlength{\itemsep}{2pt}
  \item \textbf{Theory (\S\ref{sec:theory}).} We prove that in best-of-$N$ selection, verifier--gold correlation $\rho$ is the exchange rate between compute and capability, with an unsound verifier paying a polynomial penalty $N^{1/\rho^2}$ in candidates (Proposition~\ref{prop:exchange}, Corollary~\ref{cor:penalty}); simulation matches theory to within $0.011\sigma$ (Figure~\ref{fig:theory}).
  \item \textbf{Demonstration (\S\ref{sec:demos}--\S\ref{sec:replication}).} In program-synthesis testbeds with an executable gold verifier: learned verifiers are Goodharted under best-of-$N$ pressure (relative soundness $0.94\to0.32$ at scale; absolute collapse in the minimal domain) while a sound verifier climbs monotonically; reality-anchored settlement beats a frozen verifier under directed adversarial pressure, including a settlement-aware adversary, driving the hacking gap to $\sim\!0$; verifier soundness scales log-linearly with settled labels, with on-policy settlement $\sim\!10\times$ more label-efficient than random labeling; and settlement raises claim \emph{resolution}, rebutting the concern that settlement-based reward favors vague claims. \S\ref{sec:replication} reports a pre-registered scaled replication, including the parts of our story it falsified.
  \item \textbf{Paradigm (\S\ref{sec:pcc}).} We propose \emph{proof-carrying cognition} (Algorithm~\ref{alg:pcc}): reasoning as portfolios of falsifiable, priced claims, with reality as the only unamortized loss.
  \item \textbf{Benchmark (\S\ref{sec:bench}).} We give a formal metric, Soundness-under-Pressure, and a construction recipe for a reality-settled reasoning benchmark.
\end{itemize}

\bfpara{Scope note:} this is a hybrid position-and-proof-of-concept paper. The experiments in \S\ref{sec:theory}--\S\ref{sec:demos} are real and reproducible but deliberately minimal; they validate mechanisms, not frontier-scale claims.

\section{The Bottleneck: Generation Has Scaled; Verification Has Not}

\subsection{The verification gap}

Define a \emph{verifier} for a domain $D$ as an oracle $V:\mathrm{claims}(D)\to[0,1]$ that is (a) \emph{sound}---its acceptances track ground truth; (b) \emph{cheap}---evaluating $V$ costs far less than generating a candidate; and (c) \emph{robust under optimization}---an adversarially trained generator cannot systematically obtain reward from $V$ for false claims. The verification gap is the observation that such oracles exist today only for formally grounded domains (proof kernels, program execution, game rules), and that every substitute used elsewhere---human raters, reward models, LLM judges---fails property (c) under sufficient optimization pressure. \citet{gao2023scaling} quantified the failure directly: optimizing against a fixed learned reward model yields proxy reward that rises while gold reward peaks and then falls---Goodhart's law \citep{strathern1997improving} operating as a hard ceiling. A model cannot climb above the discrimination ability of its judge \citep{leike2018scalable,bowman2022measuring}.

\begin{figure}[t]
  \centering
  \includegraphics[width=\columnwidth]{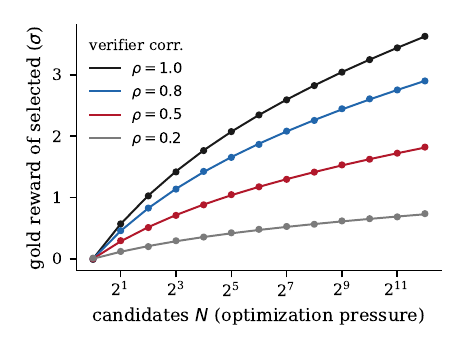}
  \caption{Simulation (dots; 20,000 trials per point) versus theory (lines; $\rho\,\E[\max_N]$) for best-of-$N$ selection under a proxy verifier with correlation $\rho$ to gold. Maximum absolute deviation from theory across all points: $0.011\sigma$. At $N{=}4096$, the $\rho{=}0.5$ verifier realizes 50.2\% of the sound verifier's gain, matching Proposition~\ref{prop:exchange} exactly.}
  \label{fig:theory}
\end{figure}

\subsection{Other bottlenecks reduce to it}

\bfpara{The data wall.} High-quality human text is finite, but self-generated data is unbounded---\emph{if} it can be trusted. AlphaGo Zero \citep{silver2017mastering} showed that with a perfect verifier (game rules), self-play alone yields superhuman skill with no human data. AlphaProof \citep{alphaproof2024} extended this to mathematics, reaching IMO silver-medal standard on millions of autoformalized problems adjudicated by the Lean kernel. The data wall is not a shortage of text; it is a shortage of trustworthy reward for synthetic reasoning.

\bfpara{Unreliable agency.} Long-horizon autonomy fails because errors compound with no mechanism to check intermediate steps before commitment. A sound step-level verifier converts open-loop generation into closed-loop, auditable action.

\bfpara{Superhuman reasoning in open domains.} In science, strategy, law, and medicine, humans can no longer reliably grade superhuman homework; scalable oversight \citep{amodei2016concrete,bowman2022measuring} is precisely the demand for verification that outruns the verifier's unaided competence.

\section{Theory: Verifier Correlation Is the Compute--Capability Exchange Rate}
\label{sec:theory}

We first make the cost of unsoundness quantitative in the cleanest possible model. Best-of-$N$ selection---sample $N$ candidates, keep the one the verifier scores highest---is the canonical mechanism by which test-time compute is converted into capability \citep{cobbe2021training,snell2024scaling}, and its induced optimization pressure grows as $\log N$.

\begin{proposition}[Verifier exchange rate]\label{prop:exchange}
Let $(G_i,P_i)_{i=1}^{N}$ be i.i.d.\ jointly Gaussian pairs, standardized, with $\Corr(G,P)=\rho$, where $G$ is gold reward and $P$ the proxy verifier's score. Let $i^*=\arg\max_i P_i$. Then
\[
\E[G_{i^*}] \;=\; \rho\,\E\!\left[\max_i P_i\right] \;=\; \rho\sqrt{2\ln N}\,\big(1+o(1)\big).
\]
\end{proposition}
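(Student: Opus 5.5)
The plan is to reduce the first equality to a regression decomposition and the second to the classical extreme-value asymptotics for Gaussian maxima. For a standardized jointly Gaussian pair, $G$ splits as
\[
G_i \;=\; \rho P_i + \sqrt{1-\rho^2}\,Z_i ,
\]
where $Z_i$ is standard normal and independent of $P_i$. Because the pairs are i.i.d., the whole vector $(Z_1,\dots,Z_N)$ is independent of $(P_1,\dots,P_N)$.

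The selected index $i^*=\arg\max_i P_i$ is a measurable function of $(P_1,\dots,P_N)$ alone; ties have probability zero and can be ignored. Conditioning on the $P$'s therefore gives $\E[Z_{i^*}\mid P_{1:N}]=\sum_i \mathbf{1}\{i^*=i\}\,\E[Z_i]=0$. Taking expectations of the decomposition evaluated at $i^*$ yields
\[
\E[G_{i^*}]=\rho\,\E[P_{i^*}]=\rho\,\E[\max_i P_i].
\]
All expectations are finite, since $|P_{i^*}|\le\sum_i|P_i|$, so this step is routine.

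It remains to show $\E[M_N]=\sqrt{2\ln N}\,(1+o(1))$, where $M_N=\max_{i\le N}P_i$ is a maximum of $N$ i.i.d.\ standard normals. Only first-order asymptotics are needed, so a sandwich argument suffices.

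\textbf{Upper bound.} Use Jensen on the moment generating function: for $t>0$,
\[
e^{t\E M_N}\le\E e^{tM_N}\le N e^{t^2/2},
\]
so $\E M_N\le \ln N/t + t/2$. Choosing $t=\sqrt{2\ln N}$ gives $\E M_N\le\sqrt{2\ln N}$.

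\textbf{Lower bound.} Fix $\epsilon\in(0,1)$ and set $a_N=(1-\epsilon)\sqrt{2\ln N}$. By the Mills-ratio lower bound, $\Pr(P>a_N)\ge c\,N^{-(1-\epsilon)^2}/\sqrt{\ln N}$, so
\[
\Pr(M_N\le a_N)=(1-\Pr(P>a_N))^N\le \exp\!\big(-cN^{1-(1-\epsilon)^2}/\sqrt{\ln N}\big)\to 0 .
\]
Then split $\E M_N\ge a_N\Pr(M_N>a_N)+\E[M_N\mathbf{1}\{M_N\le a_N\}]$ and bound the negative part below by $\E[\min(P_1,0)\mathbf{1}\{M_N\le a_N\}]$. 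By Cauchy--Schwarz this is at least $-\sqrt{\Pr(M_N\le a_N)}$, which tends to $0$. Hence $\liminf \E M_N/\sqrt{2\ln N}\ge 1-\epsilon$ for every $\epsilon$, and combining with the upper bound proves the claim.

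The main obstacle is the lower bound, specifically controlling the contribution of the left tail of $M_N$ cleanly. The Cauchy--Schwarz device above is what I would try first. If it proves awkward, I would instead cite the standard Gumbel limit $\sqrt{2\ln N}\,(M_N-b_N)\Rightarrow$ Gumbel with $b_N\sim\sqrt{2\ln N}$, together with uniform integrability. It is also worth noting in the proof that the exact identity $\E[G_{i^*}]=\rho\,\E[M_N]$ holds at every finite $N$; this is the ``exact finite-$N$ form'' invoked later in the paper, and only the second equality is asymptotic.
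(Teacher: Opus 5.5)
Your proof is correct and follows the paper's route: decompose $G_i=\rho P_i+\sqrt{1-\rho^2}\,Z_i$, note that $i^*$ is a function of the $P$'s alone so $\E[Z_{i^*}]=0$, and then apply the Gaussian-maximum asymptotics. The only difference is that the paper simply cites standard extreme-value theory for $\E[\max_i P_i]\sim\sqrt{2\ln N}$, whereas your MGF/Jensen upper bound and Mills-ratio lower bound, with Cauchy--Schwarz controlling the left tail, prove that step directly and are correct as written.
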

\begin{proof}
Decompose $G_i=\rho P_i+\sqrt{1-\rho^2}\,Z_i$ with $Z_i$ standard Gaussian, independent of $(P_1,\dots,P_N)$. The index $i^*$ is a function of $(P_1,\dots,P_N)$ alone; conditioning on these, $Z_{i^*}$ remains standard Gaussian with mean zero, so $\E[Z_{i^*}]=0$ and $\E[G_{i^*}]=\rho\,\E[P_{i^*}]=\rho\,\E[\max_i P_i]$. The asymptotic $\E[\max_i P_i]\sim\sqrt{2\ln N}$ is standard extreme-value theory for Gaussians.
\end{proof}

\begin{corollary}[Polynomial compute penalty of unsoundness]\label{cor:penalty}
To match the expected gold reward that a sound verifier ($\rho{=}1$) attains with $N$ candidates, a proxy verifier with correlation $\rho$ requires
\[
N' \;=\; N^{1/\rho^2}\quad\text{(asymptotically).}
\]
\end{corollary}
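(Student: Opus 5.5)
Corollary~\ref{cor:penalty} follows directly from Proposition~\ref{prop:exchange} once we fix what ``asymptotically'' means; the argument is a two-sided comparison of leading-order terms. By Proposition~\ref{prop:exchange} with $\rho=1$, the sound verifier attains $\E[\max_{i\le N} P_i]=\sqrt{2\ln N}\,(1+o(1))$. The proxy with correlation $\rho$ and $N'$ candidates attains $\rho\sqrt{2\ln N'}\,(1+o(1))$. Setting these equal at leading order gives
\[
\rho\sqrt{2\ln N'} = \sqrt{2\ln N}
\quad\Longleftrightarrow\quad
\ln N' = \frac{\ln N}{\rho^2}
\quad\Longleftrightarrow\quad
N' = N^{1/\rho^2}.
\]

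To make this rigorous, I would define $N'(N)$ as the smallest integer with $\rho\,\E[\max_{i\le N'}P_i]\ge \E[\max_{i\le N}P_i]$. This is well defined because $m(n):=\E[\max_{i\le n}P_i]$ is strictly increasing and unbounded in $n$. I would then prove the statement in the logarithmic sense, $\ln N'(N)/\ln N\to 1/\rho^2$, equivalently $N'=N^{(1+o(1))/\rho^2}$. From $m(n)=\sqrt{2\ln n}\,(1+o(1))$ and the defining inequalities $\rho\, m(N')\ge m(N)>\rho\, m(N'-1)$, squaring gives $\rho^2\ln N'(1+o(1))\ge \ln N$ and $\rho^2\ln(N'-1)(1+o(1))\le \ln N$. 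Since $\ln(N'-1)\sim\ln N'$, the ratio $\ln N'/\ln N$ is squeezed to $1/\rho^2$.

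The main obstacle is that the $(1+o(1))$ factor in Proposition~\ref{prop:exchange} only controls ratios of $\sqrt{\ln n}$. Exponentiating it does \emph{not} yield $N'/N^{1/\rho^2}\to 1$. The known second-order term
\[
m(n)=\sqrt{2\ln n}-\frac{\ln\ln n+\ln 4\pi}{2\sqrt{2\ln n}}+o\!\left((\ln n)^{-1/2}\right)
\]
shows the correction to $\ln N'$ is $O(\ln\ln N)$, which makes $N'$ differ from $N^{1/\rho^2}$ by polylogarithmic factors. So I would state explicitly that ``asymptotically'' means equality of exponents, $\log N'\sim \rho^{-2}\log N$. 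If a sharper form is wanted, I would substitute this expansion and carry the $\ln\ln$ terms through, obtaining $N'=N^{1/\rho^2}(\ln N)^{c(\rho)}\,\Theta(1)$ for an explicit $c(\rho)$. For practical $N$ I would simply solve $\rho\,m(N')=m(N)$ numerically, which is the exact finite-$N$ form the abstract alludes to.
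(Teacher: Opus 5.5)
Your argument is correct, and its core is the paper's proof: equate the leading terms $\rho\sqrt{2\ln N'}$ and $\sqrt{2\ln N}$ and solve for $\ln N'$. The paper stops there and leaves ``asymptotically'' informal. Your squeeze, using the smallest $N'$ with $\rho\,m(N')\ge m(N)$, shows the statement holds exactly in the exponent sense $\ln N'/\ln N\to 1/\rho^2$. Your remark that it fails in the ratio sense is also right.

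Carrying your second-order term through the matching condition $\rho^2 m(N')^2=m(N)^2+o(1)$ gives
\[
\ln N' \;=\; \frac{\ln N}{\rho^2}\;-\;\frac{1-\rho^2}{2\rho^2}\,\ln\ln N\;+\;O(1),
\]
so $c(\rho)=-(1-\rho^2)/(2\rho^2)<0$. The closed form $N^{1/\rho^2}$ therefore overstates the required budget by a factor growing like $(\ln N)^{(1-\rho^2)/(2\rho^2)}$. This is consistent with the paper's replication finding (H1b) that $N^{1/\rho^2}$ overstates the matching budget by $8$--$24\times$ at $\rho=0.5$. It also shows the overstatement does not wash out at large budgets: the closed form is asymptotic only in the exponent, which is sharper than the paper's framing of it as a practical-$N$ effect.

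One slip: your quoted expansion of $m(n)$ omits the Gumbel-mean contribution $\gamma/\sqrt{2\ln n}$, where $\gamma$ is Euler's constant. That term is of the same order as the constant part of the term you kept. The correct form is $m(n)=\sqrt{2\ln n}-\frac{\ln\ln n+\ln 4\pi-2\gamma}{2\sqrt{2\ln n}}+o\big((\ln n)^{-1/2}\big)$. The omission changes only the constant absorbed into your $\Theta(1)$, not $c(\rho)$ or any of your conclusions.
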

\begin{proof}
Set $\rho\sqrt{2\ln N'}=\sqrt{2\ln N}$ and solve: $\ln N'=\ln N/\rho^2$.
\end{proof}

\bfpara{Two readings.} First, \emph{within this model class, verifier correlation is the exchange rate between test-time compute and capability}: every unit of inference compute is worth exactly $\rho$ of its sound-verifier value (Figure~\ref{fig:theory} and the 100,000-trial replication of \S\ref{sec:replication} confirm this to within $0.011\sigma$ and $0.007\sigma$ respectively). Outside the model class the identity is \emph{not quantitatively predictive in either direction}: in the synthetic testbed of \S\ref{sec:replication}, a verifier with measured $\rho\approx0.50$ realizes only $\approx0.32$ of the sound verifier's value at $N{=}4096$ (exploitable structure beyond Gaussian noise), while real LLM judges (\S\ref{sec:real}) with near-zero \emph{linear} correlation ($\rho\approx0.12$) realize $0.74$ at $N{=}32$ (good top-ranking despite poor linear fit). Only the qualitative ordering---higher-fidelity judges convert pressure to capability better---survived every testbed. The repair (\S\ref{sec:repair}) is to state the law in the invariant that selection actually sees:

\begin{proposition}[Margin-free exchange rate]\label{prop:copula}
Let $(G_i,P_i)_{i=1}^N$ be i.i.d.\ with continuous margins and Gaussian copula parameter $\rho_c$, and $i^*=\arg\max_i P_i$. Then the law of $G_{i^*}$---hence $\Snd@N$---depends only on $\rho_c$ and the margin of $G$; the margin of $P$ is irrelevant.
\end{proposition}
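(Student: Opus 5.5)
The plan is to reduce everything to the Gaussian case of Proposition~\ref{prop:exchange} via the copula representation, and then to observe that the selection index is unchanged by strictly increasing transformations of the proxy. Let $F_G,F_P$ denote the continuous margins, $\Phi$ the standard normal c.d.f., and $F^{-1}$ the quantile function. By Sklar's theorem, a pair with Gaussian copula $\rho_c$ and these margins has the same law as
\[
(G,P)\;\stackrel{d}{=}\;\bigl(h_G(V),\,h_P(U)\bigr),\qquad h_G=F_G^{-1}\circ\Phi,\quad h_P=F_P^{-1}\circ\Phi,
\]
where $(U,V)$ is a standardized bivariate Gaussian with $\Corr(U,V)=\rho_c$. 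I will therefore realize the $N$ i.i.d.\ pairs as $(G_i,P_i)=(h_G(V_i),h_P(U_i))$ with $(U_i,V_i)$ i.i.d. Since every claim concerns only the law of $G_{i^*}$, this realization is without loss of generality.

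The key step is to show that $i^*=\arg\max_i P_i$ equals $\arg\max_i U_i$ almost surely. The selection index then becomes a function of $(U_1,\dots,U_N)$ only, and the margin $F_P$ drops out entirely. For this I need $h_P$ to be strictly increasing on the support of $U$. Because $F_P$ is continuous, $P$ has no atoms, and atoms are exactly what produce flat pieces in $F_P^{-1}$. Hence $F_P^{-1}$ is strictly increasing on $(0,1)$, and so is $h_P$.

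This point is the main obstacle, since it is where the hypothesis of continuous margins is genuinely used. Gaps in the support of $P$ cause jumps in $F_P^{-1}$, but jumps preserve strict order, so they are harmless. Ties among the $U_i$ occur with probability zero, so the argmax is almost surely unique in both parametrizations. I will state plainly that if $P$ had atoms, ties would need a tie-breaking rule and the invariance could fail. This explains why the hypothesis is stated.

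Given that step, we have $G_{i^*}=h_G(V_{i^*})$, where $i^*$ is determined by the $U$'s. I will compute the law of $V_{i^*}$ as in the proof of Proposition~\ref{prop:exchange}. Write $V_i=\rho_c U_i+\sqrt{1-\rho_c^2}\,Z_i$, with $Z$ independent of $U$. Then, conditionally on the $U$'s,
\[
V_{i^*}\mid U \;\sim\; \mathcal N\bigl(\rho_c\max_i U_i,\;1-\rho_c^2\bigr),
\]
and $\max_i U_i$ has the known law of the maximum of $N$ standard normals. Consequently the law of $V_{i^*}$ depends only on $(\rho_c,N)$. Pushing it forward through $h_G$ gives a law of $G_{i^*}$ that depends only on $\rho_c$, $N$ and $F_G$. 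Any functional of this law, in particular the soundness-at-$N$ metric, inherits the same dependence and is independent of $F_P$. As a remark, this also yields the explicit margin-free formula
\[
\E[G_{i^*}]=\E\bigl[h_G\bigl(\rho_c M_N+\sqrt{1-\rho_c^2}\,Z\bigr)\bigr],
\]
where $M_N$ denotes the maximum of $N$ standard normals and $Z$ is an independent standard normal.
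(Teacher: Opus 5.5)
Your proof is correct and follows the paper's argument. The paper likewise uses invariance of $i^*$ under strictly increasing transforms of $P$ to pass to a standard Gaussian proxy, then notes that the law of $G$ at the selected index is fixed by the copula and $F_G$; you additionally supply the Sklar realization, the strict monotonicity of $F_P^{-1}$ from continuity, and the explicit formula via the decomposition of Proposition~\ref{prop:exchange}. One small precision: $\mathrm{Snd}@N$ is a ratio whose denominator $\E[\max_i G_i]$ is not a functional of the law of $G_{i^*}$, but it depends only on $F_G$ and $N$, so your conclusion about $\mathrm{Snd}@N$ stands.
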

\begin{proof}
$i^*$ is invariant under strictly increasing transforms of $P$, so transform $P$ to standard Gaussian; the conditional law of $G$ given the ranks of $(P_1,\dots,P_N)$ is determined by the copula and $G$'s margin.
\end{proof}

Estimating $\rho_c$ from rank correlation and simulating with empirical margins predicts per-problem realized soundness with median error 4.1\% on real LLM-judge data, where Pearson-based prediction errs by 79 points (\S\ref{sec:repair}). Second, the penalty for unsoundness is superlinear in candidates. Two cautions on the closed form $N^{1/\rho^2}$: it is asymptotic, and at practical budgets it materially overstates the penalty (by up to 24$\times$ at $\rho{=}0.5$, $N\le16$; \S\ref{sec:replication}). The exact finite-$N$ consequence of Proposition~\ref{prop:exchange}---$N'$ solves $\rho\,\E[\max_{N'}]=\E[\max_N]$---predicts empirical matching budgets to within 3.4\% and should be used instead at finite $N$. We note honestly that this linear-Gaussian model predicts a slowed climb, not the \emph{decline} observed empirically under heavy optimization \citep{gao2023scaling}; decline requires misspecification, and \S\ref{sec:replication} shows it is domain-dependent: present in the minimal domain, absent (plateau instead) in a richer one at the pressures we could apply.

\section{The Validated Attack: Asymmetric Verification}

The strongest evidence-backed approach to the gap exploits a structural asymmetry: checking is often far cheaper than generating---morally the P-versus-NP gap, and the only free lunch available to this program. Three mechanisms have real support.

\bfpara{Formal grounding.} Autoformalization \citep{wu2022autoformalization} lets a sound kernel adjudicate reasoning with zero Goodharting: a proof checker cannot be flattered. AlphaGeometry \citep{trinh2024solving} and AlphaProof \citep{alphaproof2024} are existence proofs that self-improvement loops close when the verifier is sound. The kernel functions as a \emph{trust anchor}---analogous to the small trusted computing base of secure systems---from which learned verifiers extend coverage but to which they must be re-anchored.

\bfpara{Process-level verification.} Outcome reward is sparse and rewards lucky wrong reasoning; step-level reward is dense and localizes error. Process reward models outperform outcome-only supervision on hard mathematics \citep{uesato2022solving,lightman2024lets}, extending \citet{cobbe2021training}, and verifier-guided search gives test-time compute its own scaling behavior \citep{snell2024scaling}---which, by Proposition~\ref{prop:exchange}, is worth exactly as much as the verifier is sound.

\bfpara{Adversarial prover--verifier co-training.} Debate \citep{irving2018ai} and prover--verifier games \citep{kirchner2024prover} train generator and verifier against each other; \citet{kirchner2024prover} report the property we most need---models trained to convince a weaker verifier produce reasoning that is genuinely easier for humans to check.

\bfpara{Limits.} (1) \emph{Coverage}: autoformalization of empirical claims (``this drug target is promising'') is unsolved. (2) \emph{Drift}: learned verifiers degrade under optimization pressure when the anchor is distant \citep{gao2023scaling}. (3) \emph{The legibility tax}: checkable formats may exclude valid intuitions that resist formalization. These limits motivate our proposal.

\section{Proof-Carrying Cognition}
\label{sec:pcc}

We now state the central hypothesis of this paper.

\bfpara{Hypothesis.} \emph{Stop verifying reasoning against human judgment. Verify it against predicted reality, using a world model the system itself constructs and is scored on---so that every chain of thought becomes a portfolio of falsifiable bets, and reality becomes the reward function.}

We call the paradigm \emph{proof-carrying cognition} (PCC), by analogy to proof-carrying code: an artifact is admitted not because an authority approves it, but because it carries the material needed to check it. Science operates this way institutionally; PCC makes an AI system's internal reasoning operate this way, step by step, at training scale.

\begin{figure*}[t]
  \centering
  \includegraphics[width=\textwidth]{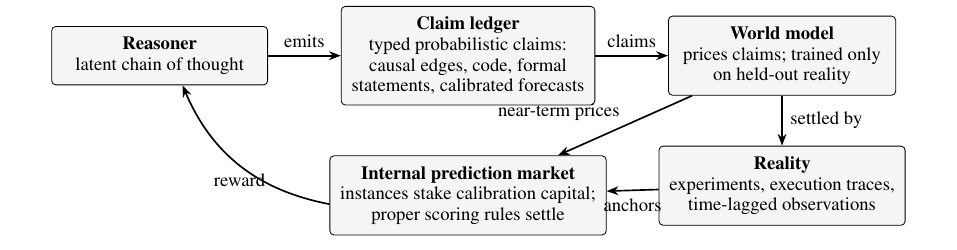}
  \caption{Proof-carrying cognition. Reasoning steps are emitted as typed, probabilistic claims in a claim ledger. A self-built world model prices claims densely and immediately; reality settles them sparsely but incorruptibly, and the world model's own loss is anchored exclusively to that settlement. Reward for reasoning is expected score under proper scoring rules, not judge approval.}
  \label{fig:arch}
\end{figure*}

\subsection{Architecture and training loop}

PCC has three components (Figure~\ref{fig:arch}; Algorithm~\ref{alg:pcc}).

\bfpara{Claim ledger.} Reasoning steps are emitted not only as natural-language thoughts but as typed, probabilistic claims: causal graphs with quantitative edges, executable snippets, fully formal statements where possible, calibrated forecasts where not. The ledger generalizes autoformalization \citep{wu2022autoformalization} from theorems to empirical assertions.

\bfpara{Self-built world model.} A persistent, versioned model prices ledger claims. Its \emph{only} training loss is prediction of held-out reality: experimental outcomes, execution traces, time-lagged observations. This extends the world-model tradition \citep{ha2018recurrent,lecun2022path} with a specific institutional role: the world model is \emph{amortized reality}---dense, immediate reward now, with periodic settlement keeping the amortizer honest.

\bfpara{Terminology.} We reserve \emph{world model} for the full generative component above. The learned predictors in our experiments (\S\ref{sec:demos}--\S\ref{sec:real}) are deliberately minimal \emph{settlement models}---amortizers of settled outcomes over shallow features---and should not be read as world models in the model-based-RL or generative sense; building the latter at training scale is untested (\S\ref{sec:programme}).

\bfpara{Internal prediction market.} Instances of the reasoner stake calibration capital on claims; settlement uses strictly proper scoring rules \citep{savage1971elicitation,gneiting2007strictly}, adjudicated by the world model near-term and reality long-term. The construction is deliberately reminiscent of logical induction \citep{garrabrant2016logical}, where market dynamics over settled claims yield asymptotically calibrated, non-exploitable beliefs.

\begin{algorithm}[t]
\caption{Proof-Carrying Cognition (one epoch)}
\label{alg:pcc}
\begin{algorithmic}[1]
\Require reasoner $\pi_\theta$; world model $W_\phi$; settlement queue $Q$; proper scoring rule $S$; drift threshold $\tau$
\For{each batch of tasks}
  \State $\pi_\theta$ emits traces with typed claims $\{(c_i,p_i)\}$
  \State near-term reward $r\leftarrow\sum_i S(p_i, W_\phi(c_i))$
  \State update $\pi_\theta$ by RL on $r$
  \State enqueue claims with settlement handles in $Q$
\EndFor
\For{each settled claim $(c,y)$ dequeued from $Q$}
  \State update $W_\phi$ on loss $\ell(W_\phi(c),y)$ \Comment{only loss: reality}
  \State pay/charge staked instances by $S(p,y)$
\EndFor
\State drift $D\leftarrow\E\big[\,|W_\phi(c)-y|\,\big]$ on settled claims
\If{$D>\tau$}
  \State downweight near-term reward; raise settlement rate
\EndIf
\end{algorithmic}
\end{algorithm}

\subsection{Why it can work where current approaches fail}

PCC dissolves the Goodhart problem structurally rather than patching it. Every existing reward source---human labels, LLM judges, learned PRMs---is a \emph{model} of correctness, and a sufficiently strong optimizer eventually games any fixed model of correctness \citep{gao2023scaling}. Reality is the unique reward function that cannot be gamed, only predicted. The classical objection---reality's feedback is too sparse and slow to train on---is answered by amortization, and the amortizer's drift is itself measurable (line~10 of Algorithm~\ref{alg:pcc}) rather than an invisible failure mode. PCC thus combines the density of a learned verifier with the incorruptibility of a formal kernel, extended for the first time into empirical domains, and it obeys the bitter lesson \citep{sutton2019bitter}: it substitutes computation and settled data for hand-built judgment.

\section{Minimal Empirical Demonstrations}
\label{sec:demos}

We validate the paper's two core mechanisms in a testbed small enough to be fully reproducible on a laptop, yet possessing the one property that matters: an executable, incorruptible gold verifier. All numbers below are measured, not estimated. We stress the scope honestly: these are mechanism demonstrations, not frontier-scale evidence.

\begin{figure}[t]
  \centering
  \includegraphics[width=\columnwidth]{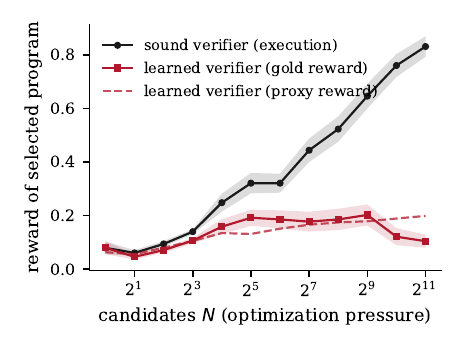}
  \caption{Goodharting under optimization pressure. Best-of-$N$ selection by the sound (execution) verifier improves monotonically to 0.83 gold reward. Selection by the learned verifier inflates its own proxy score (dashed) while realized gold reward peaks near $N{=}512$ and \emph{falls} to 0.10 at $N{=}2048$---the overoptimization signature of \citet{gao2023scaling}, reproduced with a fully transparent gold standard.}
  \label{fig:goodhart}
\end{figure}

\subsection{Setup: program synthesis with executable ground truth}

Tasks are drawn by sampling a hidden target program of length 4 over a six-token integer DSL ($+1$, $-1$, $\times2$, $\times3$, negate, square); a candidate program's \emph{gold reward} is the fraction of eight input--output points it matches under execution (the sound verifier). The \emph{learned verifier} is a ridge regressor over token unigram/bigram counts and length---a deliberately shallow judge, standing in for any verifier that evaluates surface features of reasoning rather than executing it---fit on 1,000 labeled random programs per task. Results aggregate 60 tasks; shaded bands are $\pm1$ s.e.

\begin{table}[t]
  \centering\small
  \begin{tabular}{lccccc}
    \toprule
    $N$ & 1 & 16 & 128 & 512 & 2048 \\
    \midrule
    $\Snd@N$ & 1.00 & 0.64 & 0.40 & 0.31 & 0.13 \\
    \bottomrule
  \end{tabular}
  \caption{Soundness-under-Pressure (Definition~\ref{def:snd}) of the learned verifier in the program-synthesis testbed. A sound verifier scores 1.00 at every $N$ by construction.}
  \label{tab:snd}
\end{table}

\subsection{Result 1: learned verifiers collapse under pressure; sound ones do not}

Figure~\ref{fig:goodhart} shows gold reward of the selected program as optimization pressure (best-of-$N$) increases. Both verifiers start at 0.079 at $N{=}1$. The execution verifier climbs monotonically to 0.831 at $N{=}2048$. The learned verifier's \emph{proxy} score rises throughout, but the gold reward of its selections peaks near $N{=}512$ and declines to 0.104---the qualitative overoptimization curve of \citet{gao2023scaling}, here with a fully transparent gold standard. Table~\ref{tab:snd} reports the induced soundness metric (defined in \S\ref{sec:bench}): it collapses from 1.00 to 0.13 as $N$ grows. Optimization pressure does not merely fail to help an unsound verifier; it actively \emph{inverts} it.

\begin{figure*}[t]
  \centering
  \includegraphics[width=\textwidth]{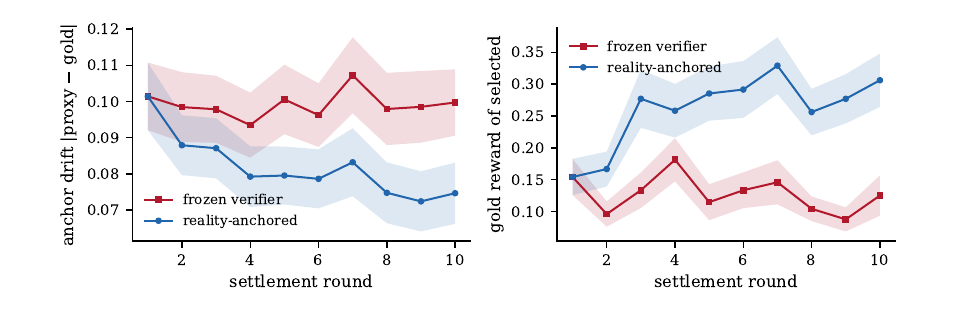}
  \caption{The settlement loop of Algorithm~\ref{alg:pcc} in miniature (60 tasks, $\pm1$ s.e.). \textbf{Left:} anchor drift $|\text{proxy}-\text{gold}|$ on selected claims. Anchored settlement reduces drift by 26\%; the frozen verifier does not improve. \textbf{Right:} gold reward of the top selection per round. Anchoring roughly doubles achieved reward ($0.154\to0.306$) while the frozen verifier stagnates and slightly degrades ($0.154\to0.125$).}
  \label{fig:loop}
\end{figure*}

\subsection{Result 2: reality-anchored settlement bounds drift and compounds capability}

We then run the settlement loop of Algorithm~\ref{alg:pcc} in miniature: each round, sample a pool of 512 programs, select the top 32 by the learned verifier, \emph{settle} them by execution, append the settled labels to the verifier's training set, and refit---versus a frozen-verifier baseline. Over ten rounds (Figure~\ref{fig:loop}), the frozen verifier's anchor drift stays flat ($\approx0.10$) and its achieved gold reward \emph{decreases} from 0.154 to 0.125: repeatedly harvesting the same exploitable features yields no compounding. The anchored verifier's drift falls by 26\% ($0.101\to0.075$) and its achieved gold reward roughly \emph{doubles} ($0.154\to0.306$). The mechanism is exactly the paper's thesis in microcosm: settlement converts the verifier's own selection pressure into training signal, so the places where the verifier is most wrong are precisely the places reality corrects it first.

\subsection{Result 3: an adversarial generator hacks a frozen verifier; settlement inverts the exploit}

Best-of-$N$ applies i.i.d.\ pressure; RL applies \emph{directed} pressure. We therefore attack each verifier with an adaptive adversary: steepest-ascent search that proposes 16 mutations per step and moves to the highest-scoring one, for 250 steps (40 tasks). The anchored condition settles the adversary's own trajectory (25 visited programs) by execution every 25 steps and refits---charging a total settlement budget of 250 reality queries per run, which we report as the explicit price of trust.

\begin{figure*}[t]
  \centering
  \includegraphics[width=\textwidth]{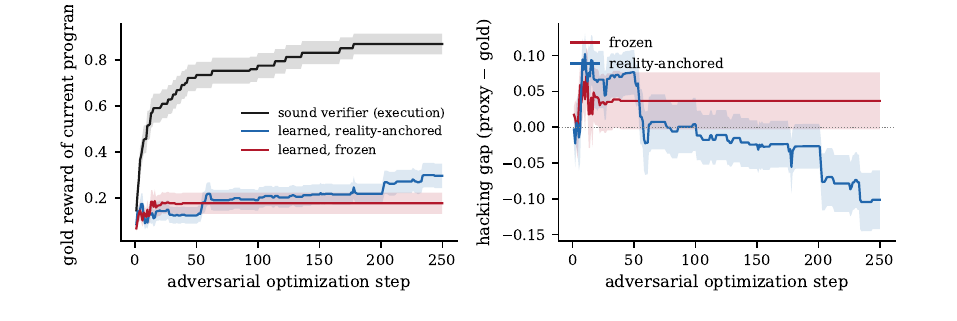}
  \caption{Adversarial optimization against three verifiers (40 tasks, $\pm1$ s.e.). \textbf{Left:} against a sound verifier, adversarial pressure is capability (0.869); against a frozen learned verifier, the adversary locks into a hacked optimum within $\sim\!15$ steps (0.178, flat); anchored settlement keeps repairing the landscape (0.297 and rising, $+67\%$ over frozen). \textbf{Right:} the frozen verifier's hacking gap (proxy $-$ gold) locks in positive; settlement \emph{inverts} it---the verifier learns to distrust the adversary's region.}
  \label{fig:adv}
\end{figure*}

Figure~\ref{fig:adv} shows three regimes. Against the \emph{sound} verifier, adversarial pressure is simply capability: gold reward climbs to 0.869. Against the \emph{frozen} learned verifier, the adversary saturates the proxy within $\sim\!15$ steps and locks permanently into a hacked optimum: gold reward flatlines at 0.178 with a persistent positive hacking gap ($+0.037$)---the verifier permanently overvalues exactly the programs the adversary produces. Against the \emph{anchored} verifier, settlement re-prices the adversary's region each round: the hacking gap does not merely shrink but \emph{inverts} to $-0.101$---after settlement, the verifier systematically distrusts the region the adversary inhabits---and gold reward reaches 0.297, a 67\% improvement over frozen, still rising at step 250. Two honest observations: anchoring restores \emph{trustworthiness} (the adversary can no longer sustain inflated scores) but does not close the capacity gap to the sound verifier---a shallow feature model remains shallow---and the inverted gap means anchored pricing errs conservative, which is the safe failure direction but not a free one.

\begin{figure}[t]
  \centering
  \includegraphics[width=\columnwidth]{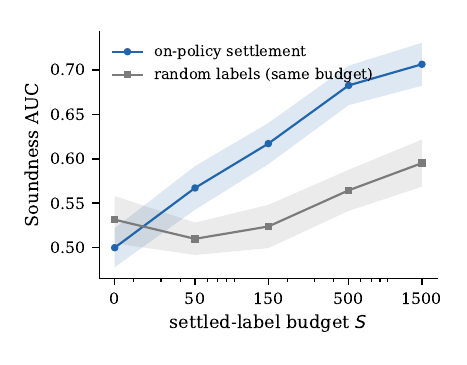}
  \caption{Soundness scales with settlement (40 tasks, $\pm1$ s.e.). Soundness AUC vs.\ settled-label budget $S$. On-policy settlement (label what the verifier selects) scales log-linearly; 150 on-policy labels beat 1,500 random labels---over 10$\times$ label efficiency.}
  \label{fig:scaling}
\end{figure}

\subsection{Result 4: soundness scales with settlement, and on-policy settlement is $>$10$\times$ more label-efficient}

If settlement is the fuel of trustworthy verification, soundness should scale with the settled-label budget---and settling \emph{where the verifier is deployed} should beat settling at random. We sweep settlement budgets $S\in\{0,50,150,500,1500\}$ on a weaker base verifier (500 random labels) under two acquisition policies: \emph{on-policy} (iteratively label the programs the current verifier selects) and \emph{random} (the same number of labels on random programs), then measure the Soundness AUC of Definition~\ref{def:snd} on fresh pools (40 tasks).

Figure~\ref{fig:scaling}: on-policy settlement scales approximately log-linearly, $0.500\to0.706$, while random labeling crawls, $0.531\to0.595$. Strikingly, 150 on-policy settled labels (0.617) outperform 1,500 random labels (0.595): on-policy settlement is more than $10\times$ more label-efficient. The mechanism is the thesis in one line---\emph{the verifier's selection pressure concentrates settlement exactly where the verifier is most wrong}---and it is why Algorithm~\ref{alg:pcc} settles the claims the reasoner actually stakes, not a random audit sample.

\subsection{Result 5: settlement buys resolution, not just calibration}

A standing objection to settlement-based reward (\S\ref{sec:failure}, failure mode 2) is that it teaches systems to make safe, vague claims: probabilities near base rate are never badly wrong. Proper scoring rules decompose exactly along this axis \citep{gneiting2007strictly}: the Brier score splits into \emph{reliability} (miscalibration; lower is better) and \emph{resolution} (how far claims usefully depart from the base rate; higher is better). We run twelve settlement rounds of probabilistic claims (``this program matches at least 2/8 I/O points''), 1,920 settled claims per round across 60 tasks, refitting the claim model on settled outcomes each round.

\begin{figure*}[t]
  \centering
  \includegraphics[width=\textwidth]{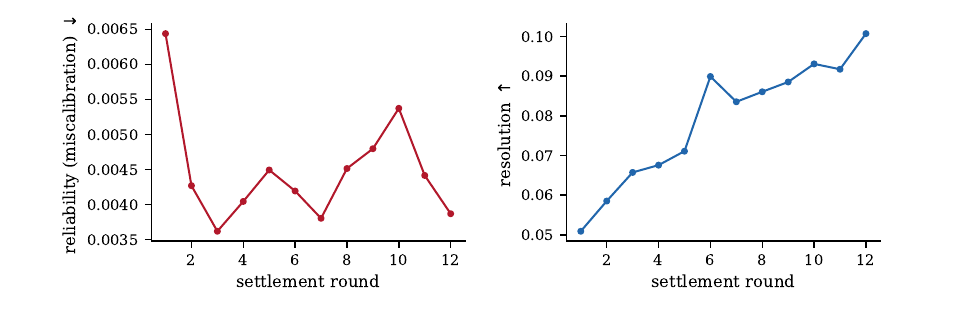}
  \caption{Brier decomposition over twelve settlement rounds (1,920 settled claims per round). Settlement improves calibration (reliability $-40\%$, \textbf{left}) while \emph{doubling} resolution ($+98\%$, \textbf{right}): claims get sharper, not vaguer.}
  \label{fig:brier}
\end{figure*}

Figure~\ref{fig:brier}: reliability falls $0.0064\to0.0039$ ($-40\%$) \emph{and} resolution doubles, $0.051\to0.101$ ($+98\%$). In this setting, settlement did not push claims toward the vague base rate; it sharpened them, because a claim model refit on settled outcomes gains exactly the discriminative signal that resolution measures. This does not retire the failure mode---an agent choosing \emph{which} claims to stake could still select easy ones, which is why RSR-Bench must score resolution explicitly---but it shows the training dynamics do not intrinsically collapse toward vagueness.

\subsection{What these demonstrations do and do not show}

They show, with measured numbers, that (i) the exchange-rate law of \S\ref{sec:theory} is quantitatively exact in its model class; (ii) unsound verifiers invert under i.i.d.\ pressure and are locked into hacked optima under directed adversarial pressure; (iii) the anchoring mechanism at the heart of PCC produces the predicted behavior---bounded and even inverted exploitation, compounding capability---against both kinds of pressure; (iv) soundness scales with settlement, log-linearly and with a large on-policy efficiency multiplier; and (v) settlement sharpens claims rather than blunting them. They do \emph{not} show that a claim language rich enough for open-ended reasoning exists, that world-model amortization scales, that these effects survive when the generator is a frontier LLM trained by RL rather than a search process, or that anchoring can close (rather than narrow) the capacity gap to a sound verifier. Those are precisely the questions the programme of \S\ref{sec:programme} is designed to answer.

\section{Scaled Pre-registered Replication}
\label{sec:replication}

The demonstrations of \S\ref{sec:demos} are deliberately minimal, so we commissioned an adversarial replication at larger scale, with every pass/fail criterion \emph{pre-registered before any experiment ran}. The suite was designed to break our claims, not to confirm them: it uses a 10-token DSL with $\sim\!10^{10}$ candidate programs (versus $\sim\!4.6\times10^4$), 16 I/O points, 2--4$\times$ more tasks with paired Wilcoxon statistics, a \emph{strong} learned verifier (gradient-boosted trees over 359 features, held-out $\rho\approx0.77$) alongside the shallow ridge verifier ($\rho\approx0.50$), an \emph{uncertainty-sampling} control for the on-policy claim, and---the gap our Limitations previously conceded---a \emph{settlement-aware} adversary that detects refits and deliberately relocates to unsettled regions. Verdicts on the ten registered hypotheses: six passed, three failed, and one registered criterion was itself invalid; we report all of it.

\begin{table}[t]
  \centering\footnotesize\setlength{\tabcolsep}{3pt}
  \begin{tabular}{llr}
    \toprule
    & \textbf{Registered claim} & \textbf{Verdict} \\
    \midrule
    H1a & Prop.~\ref{prop:exchange} exact (Gaussian) & pass \\
    H1b & $N^{1/\rho^2}$ accurate at practical $N$ & \emph{fail} \\
    H1c & exchange rate transfers beyond Gaussian & \emph{fail} \\
    H2a & weak verifier collapses absolutely & \emph{fail} \\
    H2b & strong verifier also declines & invalid$^*$ \\
    H2c & sound verifier monotone & pass \\
    H3  & anchored $>$ frozen (gold and drift) & pass \\
    H4  & anchoring survives all 3 adversaries & partial \\
    H5  & on-policy $\ge3\times$ label efficiency & pass \\
    H6  & settlement raises resolution & pass \\
    \bottomrule
  \end{tabular}
  \caption{Pre-registered verdicts. $^*$Our registered H2b criterion (final gold below per-task peak over $N$) is biased toward detecting decline; a valid exploratory test shows the strong verifier does \emph{not} collapse (see text).}
  \label{tab:verdicts}
\end{table}

\begin{figure}[t]
  \centering
  \includegraphics[width=\columnwidth]{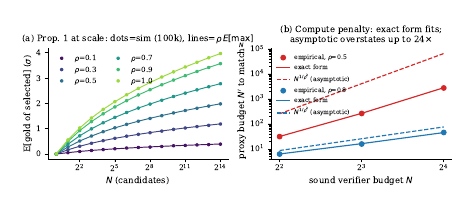}
  \caption{Theory at scale. \textbf{(a)} Proposition~\ref{prop:exchange} holds to $0.007\sigma$ over 100,000 trials per point, $N$ to $2^{14}$. \textbf{(b)} But the asymptotic penalty $N^{1/\rho^2}$ (dashed) overstates the empirically required budget (dots) by up to 24$\times$ at practical $N$; the exact finite-$N$ form (solid) matches within 3.4\%.}
  \label{fig:theoryscale}
\end{figure}

\bfpara{What survived---and got stronger.} The anchoring mechanism replicated everywhere we could aim pressure at it. Under the settlement loop (H3; 120 tasks weak, 60 tasks strong, 12 rounds), anchored settlement beat the frozen verifier on achieved gold reward for \emph{both} verifier classes (weak: 0.312 vs.\ 0.180, $p=2.7\times10^{-9}$; strong: 0.358 vs.\ 0.272, $p=5.5\times10^{-6}$) while reducing anchor drift ($-12\%$ and $-44\%$ respectively, both $p<10^{-7}$). Under directed adversarial pressure (H4; 100 tasks, 300 steps, Figure~\ref{fig:advscale}), the frozen verifier stays hacked against every adversary (persistent gap $\sim\!0.26$--$0.28$), while anchored settlement drives the gap to $\sim\!0$--$0.04$ against all three---\emph{including the settlement-aware adversary}, which relocated to unsettled regions after each refit and still could not sustain inflated scores (gap 0.039 vs.\ frozen 0.264, $p<10^{-15}$; gold 0.279 vs.\ 0.206, $p=4.0\times10^{-6}$). The registered H4 verdict is nonetheless \emph{partial}: in the steepest-ascent cell, anchored gold did not beat frozen (0.207 vs.\ 0.211, $p=0.46$). Exploratory diagnosis attributes this to adversary paralysis---that hill climber compares candidates against a stale pre-refit score and freezes after the first settlement; with the incumbent rescored after refits, anchoring wins the cell too (0.282 vs.\ 0.211, $p=5.8\times10^{-6}$). We report the registered failure and the diagnosis together. Settlement scaling (H5, 80 tasks) replicated with a sharper control: Soundness AUC scales log-linearly in the settled budget ($R^2=0.92$), on-policy settlement at $S{=}300$ already beats random labeling at $S{=}3000$ ($p=3.7\times10^{-4}$; the $\sim\!10\times$ multiplier), and the uncertainty-sampling control barely improves on random (0.587 vs.\ 0.579 at $S{=}3000$)---the efficiency comes specifically from settling \emph{where selection pressure concentrates}, not from generic active learning. Calibration-versus-resolution (H6, 100 tasks, 15 rounds) replicated: resolution $+70\%$ ($p=3.2\times10^{-10}$) with reliability improved, not traded away.

\begin{figure}[t]
  \centering
  \includegraphics[width=\columnwidth]{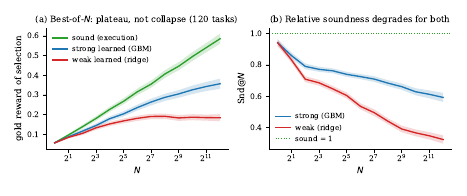}
  \caption{Goodharting in the rich domain (120 tasks, $\pm1$ s.e.). \textbf{(a)} The weak verifier \emph{plateaus} rather than collapsing; the strong verifier keeps climbing through $N{=}4096$. \textbf{(b)} Relative soundness $\Snd@N$ nonetheless degrades for both ($0.94\to0.32$ weak, $0.94\to0.59$ strong): pressure is increasingly wasted, even when it is not inverted.}
  \label{fig:richgoodhart}
\end{figure}

\begin{figure}[t]
  \centering
  \includegraphics[width=\columnwidth]{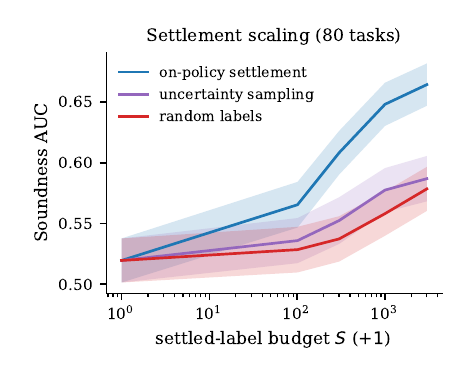}
  \caption{Settlement scaling at scale (80 tasks). On-policy settlement scales log-linearly ($R^2{=}0.92$) and dominates both random labeling and an uncertainty-sampling control: $S{=}300$ on-policy labels beat $S{=}3000$ random labels.}
  \label{fig:scalingscale}
\end{figure}

\bfpara{What failed---and how we repair it.} Three registered claims did not survive, all on the quantitative-theory side. \emph{(i) The exchange rate does not transfer quantitatively} (H1c): across 120 tasks the ordering is preserved (Spearman 0.59 between per-task $\rho$ and realized $\Snd@4096$), but the realized fraction falls short of $\rho$ on 76\% of tasks (median gap 0.27; mean 0.32 realized vs.\ $\rho\approx0.50$). Proposition~\ref{prop:exchange} should therefore be read as an \emph{optimistic ceiling} once the proxy has exploitable structure; we have revised \S\ref{sec:theory} accordingly. \emph{(ii) The closed-form penalty is asymptotic-only} (H1b): at $\rho{=}0.5$ the empirical budget needed to match a sound verifier at $N\in\{4,8,16\}$ is 8--24$\times$ \emph{smaller} than $N^{1/\rho^2}$ predicts, while the exact finite-$N$ form matches within 3.4\% (Figure~\ref{fig:theoryscale}b); the corollary is now stated with that caveat. \emph{(iii) Absolute collapse is domain-dependent} (H2a): in the rich domain the weak verifier's gold reward plateaus (0.191 peak at $N{=}256$, 0.184 at $N{=}4096$; decline not significant, $p=0.25$) instead of inverting as it does in the minimal domain---the collapse of Figure~\ref{fig:goodhart} is real but not universal at these pressures. Finally, our own registered H2b criterion turned out to be statistically invalid (comparing a final value to a per-task maximum over 13 noisy points detects ``decline'' even in monotone curves), and the honest exploratory answer runs \emph{against} our generalized claim: the strong verifier's gold reward rises monotonically through $N{=}4096$ ($0.056\to0.357$; 4096 vs.\ 1024: $p=0.0023$) even as its $\Snd@N$ erodes to 0.59. Within the pressure this suite could apply, verifier strength bought real robustness, not merely delay. Whether a strong verifier inverts at pressures beyond $N{=}4096$ is open; what is already established is that its compute is increasingly wasted relative to a sound verifier---which is the economic argument for settlement either way. We also note the toy suite's \emph{inverted} hacking gap (\S\ref{sec:demos}, Result 3) did not reproduce at scale: anchored gaps go to approximately zero (honest pricing) rather than negative (conservative distrust).

\begin{figure*}[t]
  \centering
  \includegraphics[width=\textwidth]{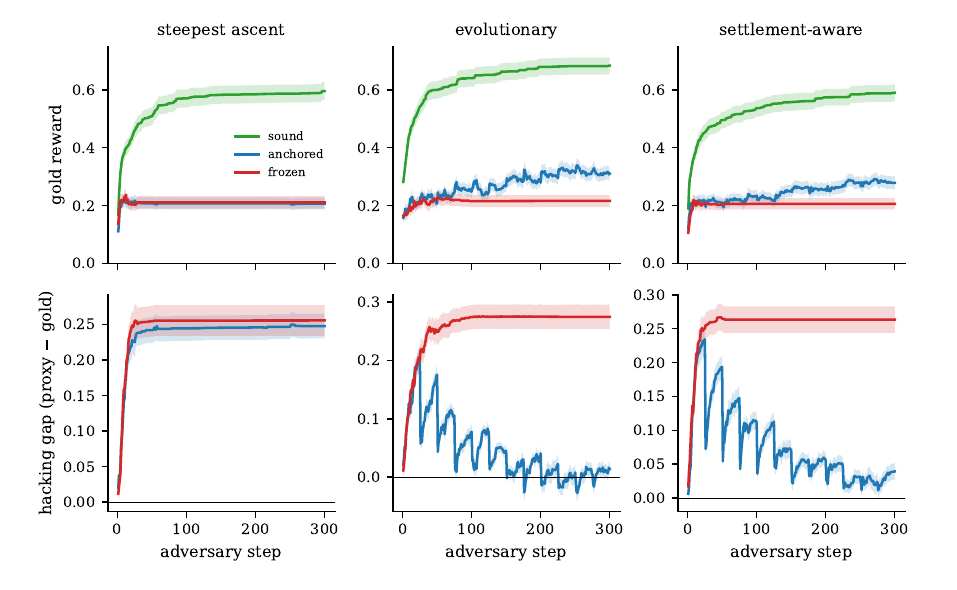}
  \caption{Adversarial pressure at scale (100 tasks, 300 steps, $\pm1$ s.e.). Top: gold reward of the adversary's champion against sound (green), anchored (blue), and frozen (red) verifiers, for steepest-ascent, evolutionary, and \emph{settlement-aware} adversaries. Bottom: hacking gap (proxy $-$ gold). Frozen verifiers stay hacked (gap $\sim\!0.26$--$0.28$); anchored settlement drives the gap to $\sim\!0$ against all three adversaries, including the one that models the settlement process. The steepest-ascent gold cell is the registered failure discussed in the text.}
  \label{fig:advscale}
\end{figure*}

\section{Real-Code and Real-Model Experiments}
\label{sec:real}

Synthetic testbeds, however hardened, are proxies. We therefore ran two further pre-registered suites on real substrate (criteria frozen before execution). The \emph{real-code} suite uses MBPP \citep{austin2021program}: 974 human-written Python programs whose real unit tests, executed locally, are the sound verifier; learned verifiers are trained \emph{across} problems (486 train / 487 eval), like reward models, over surface and AST features of real code. The \emph{real-model} suite, run by the author against the production Claude API, uses a real generator (claude-haiku-4-5, 32--48 samples/problem), two real LLM judges scoring code \emph{without executing it} (claude-haiku-4-5 weak; claude-sonnet-4-6 strong), and local execution of the real test suites of MBPP and HumanEval \citep{chen2021evaluating} as gold.

\begin{figure}[t]
  \centering
  \includegraphics[width=\columnwidth]{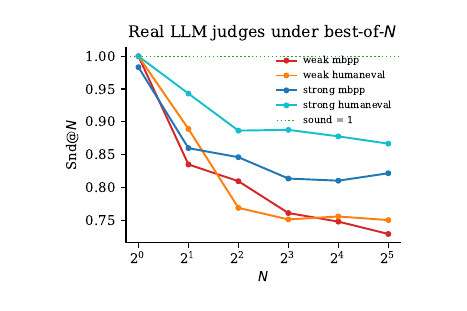}
  \caption{Real LLM judges under best-of-$N$ selection over real generated code (executed tests as gold; problems with zero gold variance excluded, $n{=}60$ MBPP / 18 HumanEval). The weak judge's $\Snd@N$ degrades significantly on both benchmarks; the strong judge is significantly more robust.}
  \label{fig:llmjudges}
\end{figure}

\bfpara{The Goodhart mechanism is real (registered passes).} The weak LLM judge loses Soundness-under-Pressure as $N$ grows: $\Snd@2=0.835\to\Snd@32=0.729$ on MBPP ($p{=}10^{-4}$, $n{=}60$) and $0.889\to0.750$ on HumanEval ($p{=}.004$)---Figure~\ref{fig:llmjudges}. On real human code with cross-problem verifiers, the collapse is catastrophic: the surface verifier falls from $\Snd@4=0.325$ to $\Snd@256=\mathbf{0.016}$, and even the gradient-boosted verifier reaches only 0.052---real code under a global verifier is a far harsher regime than any of our per-task synthetic worlds.

\bfpara{Judge capability buys robustness (registered pass, against our strong-form thesis).} The strong judge's soundness AUC exceeds the weak judge's (0.856 vs.\ 0.814, $p{=}9\times10^{-4}$), and---exploratory---its own $N{=}2\to32$ degradation is not significant ($0.855\to0.821$, $p{=}0.43$). This converges with \S\ref{sec:replication}'s finding that a strong learned verifier kept climbing: across two independent methodologies, \emph{the verification gap binds conditionally---when judge capability is at or below task difficulty---and scaling the judge materially repairs soundness at the pressures we could apply}. Whether the repair persists at frontier-difficulty tasks under RL-scale pressure is precisely what RSR-Bench (\S\ref{sec:bench}) must measure.

\bfpara{Anchoring: the mechanism matters, and so does headroom (registered failures, reported).} In the LLM settlement loop (40 problems, 8 rounds), neither anchored arm raised achieved gold over the frozen judge---a ceiling effect: the generator solves MBPP so well (mean candidate gold 0.81; the frozen judge's top-1 was already perfect on 34/40 problems) that there was no headroom, and the registered RL2a/RL2b verdicts are FAIL. Where headroom existed---the real-code suite, whose frozen cross-problem verifier earned gold 0.000---anchored settlement lifted achieved gold to 0.153 ($p{=}.004$), though its registered drift clause failed (drift rose as selections moved into higher-value, harder-to-price regions). The sharpest real-model result is \emph{how} to anchor: a learned settlement head over the judge's score and code features, refit on executed outcomes (a deliberately minimal instantiation of Algorithm~\ref{alg:pcc}'s amortizer), cut pricing drift by 59\% ($0.352\to0.146$, $p\approx0$) and cut $|\text{judge}-\text{gold}|$ on adversarial code from 0.326 to 0.254; naive in-context anchoring (showing the judge a ledger of its own settled errors) made the judge \emph{worse} on both metrics ($p<10^{-9}$). Amortize reality into a trained pricing model; do not merely show a judge its mistakes.

\bfpara{Positioning against standard baselines.} The frozen-vs-anchored contrast \emph{is} the comparison to online reward-model retraining: the anchored arm retrains its verifier online, and what the paradigm adds is the label source (settled execution rather than fresh preference or judge labels) and the schedule (the verifier's own selections). The uncertainty-sampling control of \S\ref{sec:replication} separates this from generic active learning: committee-variance acquisition barely beat random labeling (0.587 vs.\ 0.579 at $S{=}3000$) where on-policy settlement reached 0.664. Verifier ensembles and process reward models are verifier-strength interventions, orthogonal to the settlement axis; the verifier-strength axis is measured directly here (H2b, RL1b) and buys real robustness. Table~\ref{tab:ablation} gives the mechanism-level ablation of anchoring itself; a full-architecture ablation (typed claims, market, staking) requires those components to exist and is future work (\S\ref{sec:programme}).

\begin{table}[t]
  \centering\footnotesize\setlength{\tabcolsep}{4pt}
  \begin{tabular}{lcc}
    \toprule
    \textbf{Mechanism} & \textbf{Loop drift} & \textbf{Adv.\ $|$gap$|$} \\
    \midrule
    frozen judge (none) & 0.352 & 0.326 \\
    in-context ledger & 0.397 & 0.457 \\
    isotonic model & --- & tail $+0.52{\to}0.76$ \\
    feature head & \textbf{0.146} & \textbf{0.254} \\
    \bottomrule
  \end{tabular}
  \caption{Mechanism ablation of anchoring on the real-model data (weak judge). Loop drift: final-round $|\text{score}-\text{gold}|$ on selections (RL2). Adv.\ $|$gap$|$: out-of-problem $|\text{judge}-\text{gold}|$ on adversarial candidates (RL3c); the isotonic row reports its signed tail effect on mined deceptions (F4). Only the feature-level settlement head helps on both.}
  \label{tab:ablation}
\end{table}

\bfpara{Adversarial pressure: selection, not persuasion (registered failures, informative).} The prompt-level adversary failed its manipulation check---the ``subtly wrong'' code it wrote was mostly right (adversarial gold 0.839, matching honest quality)---so RL3a is an invalid test of hackability rather than evidence of robustness. The iterative adversary, shown the judge's score after each of four revisions, drove score and gold \emph{down} ($0.679\to0.567$; $0.878\to0.733$): at this capability level, directed prompt-space search could not climb a real judge, and the operative Goodhart pressure on LLM judges is the selection pressure of RL1, not persuasion. One registered criterion (RL3c) passed for the wrong reason: it assumed judges overvalue adversarial code, but these judges undervalue everything (mean gap $-0.20$), so the in-context ledger ``reduced the gap'' by worsening underconfidence; on the substantive $|\text{gap}|$ metric only the world-model head helps. We disclose this as the suite's second mis-specified criterion (with \S\ref{sec:replication}'s H2b) and record the process lesson: register criteria on $|\text{error}|$. Finally, the on-policy label-efficiency claim failed on real code for a second time (on-policy@240 AUC 0.144 vs.\ random@720 0.170, $p{=}0.64$): when the verifier is very weak, its selections are uninformative places to settle, so the $10\times$ multiplier of \S\ref{sec:replication} is a property of moderately-competent verifiers, not a universal.

\subsection{From failures to findings: a pre-registered diagnosis-and-repair program}
\label{sec:repair}

Rather than leave the registered failures as caveats, we treated each as a hypothesis about mechanism, froze numerical predictions for every diagnosis before computing it, and ran the analyses on the raw data. Four repairs succeeded; two failed and are kept failed.

\bfpara{Repaired: the exchange rate lives in the copula (F1).} Proposition~\ref{prop:copula} resolves both directions of the transfer anomaly at once: RichDSL's heavy, clumped margins made Pearson $\rho$ overstate tail concordance (realized $0.32<\rho_p{=}0.50$), while the LLM judge's coarse, saturated scores made it understate massively ($\rho_p{=}0.12$ yet realized 0.74). Rank-copula simulation with empirical margins predicts per-problem $\Snd@32$ with median error 0.041 on the real LLM data versus 0.793 for Pearson ($p{=}2\times10^{-4}$, $n{=}21$); on RichDSL a residual 0.137 error remains ($p{=}0.37$)---its verifier errors are feature-structured beyond any elliptical copula, which is precisely the misspecification settlement exists to correct.

\bfpara{Repaired: selection is the adversary (F3).} The instructed adversary of RL3a failed its manipulation check, but mining the candidate banks shows that ordinary sampling plus selection \emph{manufactures} deception: choosing the max-(judge$-$gold) candidate from 32 honest samples yields a $+0.527$ hacking gap against the weak judge, versus $-0.209$ for instructed deception ($p{=}3\times10^{-13}$), with severe natural hacks (judge $\ge0.8$, gold $\le0.2$) on 13\% of informative problems. Two sub-predictions failed: severe hacks were rarer than predicted, and the strong judge's maximum gap (0.631) exceeds the weak judge's---the gap statistic conflates ranking with calibration. The unification stands: RL1's soundness decay and reward hacking are the same phenomenon, selection pressure over honest variation; no deceptive generator is required. The mined max-gap programs form a natural seed corpus for RSR-Bench's adversarial split.

\bfpara{Repaired, with a warning: calibration is not soundness (F4).} The minimal settlement model---an isotonic judge$\to$gold map fit on executed outcomes from half the problems---cuts out-of-problem pricing error by 29.9\% ($p{=}5\times10^{-119}$; registered threshold 30\%, missed by 0.1pp). But a monotone recalibration provably cannot change selection, hence cannot move $\Snd@N$; and because these judges are underconfident on average, recalibration raises scores and \emph{amplifies} the overvalued tail (mined-deception gap $0.52\to0.76$). Deception lives in the residual, not the mean. This yields the paradigm's sharpest design constraint: \emph{the settlement model must be at least as expressive as the deception surface}; average-calibration anchoring is not merely insufficient but tail-perverse. The feature-based head also failed to transfer Snd across problems ($p{=}0.55$), consistent with cross-problem code verification being the hard regime throughout our suites.

\bfpara{Bounded: the conditional-binding conjecture (F2).} Weak-judge degradation concentrates on hard problems (hardest tercile 0.092 vs.\ easiest $-0.001$; $p{=}0.054$), but the strong judge is flat across all of MBPP's difficulty range---MBPP-hard is not hard for it. ``The gap binds where task difficulty reaches judge capability'' therefore remains a conjecture with directional weak-judge support only, and it fixes RSR-Bench's design brief: difficulty-stratified, selection-pressure-swept, execution-settled.

\bfpara{Failed repairs, kept failed (F5, F6).} The competence-threshold explanation of the on-policy-efficiency discrepancy is ruled out: in RichDSL the on-policy advantage is significantly positive even for a near-incompetent base verifier ($+0.087$ at 150 labels, $p{=}0.018$) and not increasing in competence ($+0.096$ at 600, $+0.050$ at 2400); and the coverage hypothesis is ruled out as well (F7, predictions frozen before running): a mixed policy---half on-policy, half random labels per batch---performs like pure on-policy (0.148 vs.\ 0.147 AUC at $S{=}720$) and below random (0.170) under the identical protocol. With competence and coverage both excluded, the leading untested candidate is covariate shift: settled-selection labels correct pricing in the selected region while degrading global ranking under a cross-problem verifier---consistent with settlement improving drift but not selection throughout the real suites. Why on-policy settlement wins in per-task worlds and loses for cross-problem code verifiers is the program's top open problem. And on the real-data hard subset, the hybrid head improved pricing ($p{=}0.009$) but not achieved gold ($p{=}0.36$, $n{=}19$): capability gains from anchoring remain demonstrated only in the synthetic and real-code CPU suites.

\subsection{Reality-anchored reward under real policy-gradient training}
\label{sec:grpo}

The proxies above (best-of-$N$, search, iterative prompting) approximate but do not instantiate the optimization process of modern RLVR. A fourth pre-registered suite (criteria R-A--R-E and amendment R-B$'$ frozen before any run) closes that gap: GRPO fine-tuning of Qwen2.5-1.5B-Instruct (QLoRA, group size 8, 400 steps, fixed small KL) on MBPP, under four reward arms identical in everything but the reward signal: (A) \emph{frozen}---a settlement RM (ridge over surface/AST features) fit once on 1,600 executed base-model samples, then frozen (the overoptimization setup of \citet{gao2023scaling}); (B) \emph{anchored}---the identical RM, refit every 25 steps on a settled $\sigma{=}10\%$ of the window's rollouts (Algorithm~\ref{alg:pcc}, live); (C) \emph{execution}---real test execution as reward (sound ceiling); (D) \emph{online-judge}---identical to (B) but refit labels come from an LLM judge rather than execution, isolating the label source from online-ness. The RM is deliberately minimal---a model organism: arms (A), (B), (D) share an identical model class, so the treatment isolates the settlement update rule itself, and a weak RM makes overoptimization observable at accessible scale. RM strength is not assumed orthogonal to settlement; their interaction is characterized in \S\ref{sec:replication} (H2b), \S\ref{sec:real} (RL1b), and F4. True executed reward is logged for every rollout in every arm (as metrics; it trains only arm C).

\begin{table}[t]
  \centering\small\setlength{\tabcolsep}{4pt}
  \begin{tabular}{lcccc}
    \toprule
    \textbf{Arm} & \textbf{Proxy} & \textbf{Gold} & \textbf{Peak} & \textbf{Gap} \\
    \midrule
    frozen RM           & 0.615 & 0.063 & 0.617 & $+0.552$ \\
    online-judge        & 0.530 & 0.288 & 0.635 & $+0.241$ \\
    anchored            & 0.528 & 0.397 & 0.665 & $+0.131$ \\
    execution (ceiling) & 0.460 & 0.460 & 0.687 & $0.000$ \\
    \bottomrule
  \end{tabular}
  \caption{Final-window (25-step) training outcomes, seed 0. Gold = executed reward; Peak = best rolling-window gold; Gap = proxy $-$ gold. The frozen arm's proxy is the highest and its reality the lowest.}
  \label{tab:grpo}
\end{table}

\bfpara{Results.} Table~\ref{tab:grpo} and Figure~\ref{fig:grpo} report the four matched runs. The frozen arm traces the full overoptimization curve of \citet{gao2023scaling}: proxy reward climbs $0.442\to0.615$ while executed reward rises to a rolling peak of 0.617 and then collapses to 0.063---a 90\% destruction of real capability under continued proxy gains---as the policy converges to $\sim\!17$-token degenerate completions at KL 1.35 from base. The anchored twin ends the same 400 steps at executed reward 0.397 (6$\times$ frozen), hacking gap $+0.131$ vs.\ $+0.552$, ordinary $\sim\!44$-token completions, and KL 0.096: each of its 15 refits repriced what the RM had begun to overpay before the policy could commit to it---at roughly one-tenth of the ceiling arm's execution budget, while retaining 86\% of the ceiling arm's final executed reward (0.397 vs.\ 0.460).

\begin{figure*}[t]
  \centering
  \includegraphics[width=\textwidth]{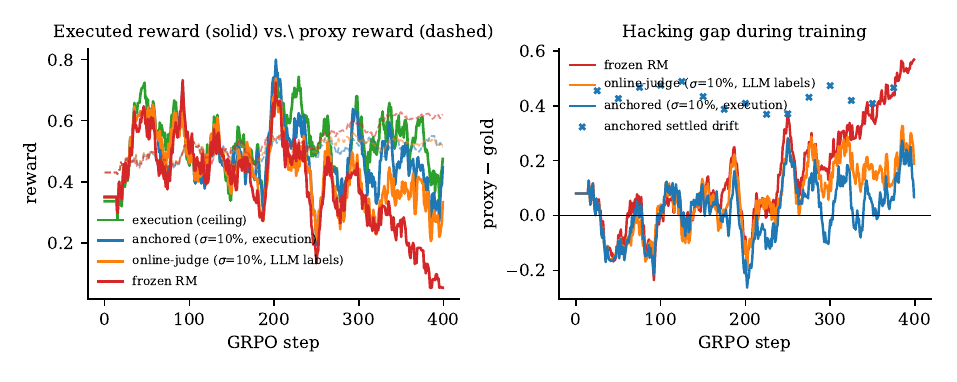}
  \caption{The overoptimization curve with a settlement knob (seed 0; rolling mean, window 15). \textbf{Left:} executed reward (solid) and proxy reward (dashed). The frozen RM's proxy climbs while executed reward collapses; the identical RM refit on a 10\% settlement stream (anchored) tracks the execution ceiling; the equal-budget judge-labeled control sits between them. \textbf{Right:} the hacking gap (proxy $-$ gold) during training, with the anchored arm's settled-drift measurements ($\times$). Settlement repeatedly closes the gap the frozen arm rides to collapse---but drift magnitude does not rank-track the gap (the registered R-D criterion fails; see text).}
  \label{fig:grpo}
\end{figure*}

\bfpara{Reality as the label source (R-B$'$).} The online-judge control isolates the paper's title claim: identical RM, identical refit cadence, identical 10\% budget---only the labels differ (an LLM judge's estimates instead of executed outcomes). Online updating alone recovers much of the damage (0.288 vs.\ frozen's 0.063), but reality-sourced labels beat judge-sourced labels by $+0.109$ executed reward with half the hacking gap ($+0.131$ vs.\ $+0.241$), and the judge arm partially degenerates anyway ($\sim\!24$-token completions, KL 0.443). The mechanism is visible in the logs: the judge's labels erred against execution by 0.321 on average across refits (range 0.18--0.46)---the RM was being re-anchored to a proxy of a proxy. Online-ness is worth a lot; reality is worth more, and the difference is the judge's blindness on-policy.

\bfpara{Registered criteria at seed 0---including a failure.} Both R-A clauses hold (proxy $+0.173\ge0.10$; final gold $0.063\le$ peak $-0.03$); both R-B clauses hold (anchored gold exceeds frozen by $0.334\ge0.05$; smaller gap); both R-B$'$ clauses hold; R-C holds ($0.397\ge0.8\times0.460=0.368$). R-D \emph{fails}: the settled-drift statistic does not rank-predict the concurrent hacking gap (Spearman $\rho_s=0.06$ over 15 refits, vs.\ the registered $\ge0.5$), and an exploratory cross-arm variant (drift vs.\ the frozen-minus-anchored gap) is also null ($\rho_s=-0.40$, $p=0.14$) with no range-restriction excuse available (gap variance is comparable across arms). At this scale and refit cadence, settlement repaired the reward while its drift alarm did not see what it was repairing---the repair effect and the alarm effect are separable, and prediction 4 of \S\ref{sec:predictions} is already under adverse pressure. We also note the anchored arm's final gold sits below its own mid-run peak (0.665); replication seeds run identically under the same registration, and held-out evaluations (R-E) accompany the full-suite verdicts. All runs execute on a single consumer machine (Apple M1, 16GB); the canonical 972-problem split and initial RM were computed once and fixed before any training run.

\section{RSR-Bench: A Reality-Settled Reasoning Benchmark}
\label{sec:bench}

Every prior revolution in this field was preceded by the metric that made it legible: ImageNet before deep learning's coronation \citep{deng2009imagenet}, scaling-law perplexity before large pretrained models \citep{kaplan2020scaling}, MATH and HumanEval before reasoning RL \citep{hendrycks2021measuring,chen2021evaluating}. Today the field optimizes judges it knows to be gameable because nothing better exists to climb. We therefore specify the missing metric.

\begin{definition}[Soundness-under-Pressure]\label{def:snd}
For a proxy verifier $V$, gold standard $G$, and candidate distribution $\mu$, let $x^V_N$ be the candidate selected by $V$ from $N$ i.i.d.\ draws of $\mu$, and $x^G_N$ the candidate selected by $G$. Then
\[
\Snd@N \;=\; \frac{\E\big[G(x^V_N)\big]}{\E\big[G(x^G_N)\big]},
\]
and the headline benchmark score is the area under $\Snd@N$ over $\log N$.
\end{definition}

A sound verifier scores 1 at every $N$; Table~\ref{tab:snd} shows how sharply a shallow verifier departs from it. On cost: $\Snd@N$ is an \emph{evaluation} metric, not a training-loop requirement, and it amortizes---one pool of $N_{\max}$ candidates, generated and scored once, yields the whole curve by subsampling (our entire real-model measurement of two judges on two benchmarks cost $\sim\!3.5\times10^4$ API calls, i.e.\ tens of dollars). Inside training, PCC's marginal cost over standard RLVR is the settlement rate---the fraction of staked claims actually executed---which Algorithm~\ref{alg:pcc} (line 10) adapts to measured drift, and which the label-efficiency results bound where they hold. The crucial design property is that $\Snd@N$ measures verifiers \emph{under the optimization pressure they will actually face}, which single-point accuracy does not.

\begin{table}[t]
  \centering\small
  \begin{tabular}{lccc}
    \toprule
    \textbf{Reward source} & \textbf{Dense} & \textbf{Sound} & \textbf{Robust} \\
    \midrule
    Human preference & \checkmark & $\sim$ & $\times$ \\
    LLM judge        & \checkmark & $\sim$ & $\times$ \\
    Outcome reward   & $\times$ & \checkmark & $\sim$ \\
    Process RM       & \checkmark & $\sim$ & $\sim$ \\
    Formal kernel    & \checkmark & \checkmark & \checkmark \\
    \emph{PCC (proposed)} & \checkmark & \checkmark$^\dagger$ & \checkmark$^\dagger$ \\
    \bottomrule
  \end{tabular}
  \caption{Reward sources by density, soundness, and robustness under optimization pressure. $\sim$ = partial; $^\dagger$ = conjectured, contingent on anchor-drift control; \S\ref{sec:demos} validates both cells at toy scale and \S\ref{sec:programme} tests them at scale. Formal kernels are sound and robust but cover only formalizable domains.}
  \label{tab:rewards}
\end{table}

\bfpara{Construction recipe.} Assemble tens of thousands of typed claims across science, engineering, and forecasting, each with a known settlement date and mechanically checkable ground truth: time-sliced scientific claims settled by later replication, performance-engineering claims settled by execution, forecasts settled by events \citep{halawi2024approaching}. Freeze corpora at date $T$; settle after $T$. Score submitted verifiers by $\Snd@N$ against the settled gold, and score submitted reasoners by settled claim portfolios under proper scoring rules \citep{gneiting2007strictly}. Table~\ref{tab:rewards} situates the paradigm this benchmark is designed to measure.

\subsection{What would falsify the paradigm: registered predictions at the next scale}
\label{sec:predictions}

We do not claim to have established a paradigm; paradigm claims are not establishable by their authors. We claim to have made one \emph{measurable}, and we stake it on numbered predictions that any frontier laboratory can test in days. Each is stated so that a clean negative result refutes the corresponding component.

\begin{enumerate}\setlength{\itemsep}{2pt}
  \item \textbf{Conditional binding.} For every judge, on tasks in its frontier stratum (single-sample accuracy in $[0.3,0.7]$), $\Snd@N$ declines by $\ge0.10$ from $N{=}2$ to $N{=}256$. Evidence so far: confirmed for the weak judge; untestable for the strong judge on MBPP (no frontier stratum exists there---\S\ref{sec:repair}, F2). \emph{Refuted if} a frontier judge holds $\Snd@256$ within 0.05 of $\Snd@2$ on its own frontier stratum.
  \item \textbf{Copula law.} For frontier judges on frontier tasks, per-problem realized soundness lies within $\pm0.10$ (median) of the rank-copula prediction of Proposition~\ref{prop:copula} (measured: 0.041 for the weak judge on MBPP). \emph{Refuted by} systematic departures beyond that band.
  \item \textbf{Overoptimization boundary.} Policy-gradient training against a frozen learned RM diverges (proxy up, executed reward flat or down) whenever the RM's on-policy soundness is below the policy's achievable gold, and reality-anchored refitting at a settlement rate adapted to measured drift removes the divergence at a small fraction of full settlement. Operationalized at 1.5B scale in our registration (criteria R-A--R-C); predicted to persist at every scale at which the precondition holds.
  \item \textbf{Drift is a universal alarm.} Wherever divergence occurs, the settled-drift statistic of Algorithm~\ref{alg:pcc} (line 10) rank-predicts the frozen-vs-anchored gap online ($\rho_s\ge0.5$; registered as R-D). \emph{Refuted by} hacking that drift does not see. First evidence is adverse: the registered seed-0 test failed ($\rho_s{=}0.06$; \S\ref{sec:grpo}).
  \item \textbf{Selection-manufactured deception.} The maximum hacking gap obtainable by pure best-of-$N$ selection over honest samples grows with $N$ at the extreme-value rate implied by the copula, and dominates instructed deception until adversary capability exceeds judge capability (measured at one point: $+0.53$ vs.\ $-0.21$; \S\ref{sec:repair}, F3).
\end{enumerate}

We commit to reporting confirmations and refutations of these predictions alike; each is stated so that it can be tested without our involvement.

\section{Validation Programme at Scale}
\label{sec:programme}

The full hypothesis is falsifiable through three staged experiments.

\bfpara{Retrodiction gyms.} Train the claim-ledger system on time-sliced corpora---papers, markets, codebases up to date $T$---and score claims against reality after $T$, yielding millions of settled ``experiments'' at zero laboratory cost \citep{halawi2024approaching}. \emph{Falsification criterion:} if PCC-trained reasoning does not beat judge-rewarded reasoning on retrodictive forecasting and on reproduction of later-discovered results, the hypothesis fails.

\bfpara{Closed-loop empirical domains.} Deploy where reality settles fast and automatically: software performance engineering (execution is the settler), robotic manipulation, automated cloud-lab biology. Measure whether world-model-priced reward beats PRM reward on downstream success \emph{and} on calibration under distribution shift---the metric distinguishing a self-improving loop from a self-deceiving one.

\bfpara{Anchor-drift audits.} Continuously measure divergence between world-model settlement and realized settlement (Algorithm~\ref{alg:pcc}, line 10). This scalar is the health metric of the system and, not incidentally, an alignment metric: a system rewarded for calibrated truth-tracking is structurally disincentivized from deceiving its overseers, because deception manifests as settlement loss.

\section{Failure Modes}
\label{sec:failure}

We flag three honestly. (1) The claim language may be too lossy to carry the best reasoning---the legibility tax, sharpened. (2) Market dynamics may reward safe, vague claims; scoring must pay for resolution as well as calibration \citep{gneiting2007strictly}. (3) A system optimizing predicted reality could learn to make reality predictable in undesirable ways; settlement must score prediction of a world the agent does not manipulate. This is an open, safety-critical design problem (see Ethics Statement).

\bfpara{Meta-reward hacking and the trusted computing base.} Settlement does not eliminate trust; it relocates it. An agent can attack the settlement substrate itself: sandbox exploits, manipulation of the measured environment, strategic delay of settlement, or staking only easy-to-settle claims. The honest accounting is a trusted computing base (TCB): PCC's TCB is the executor/measurement apparatus plus the settlement scheduler and staking rules---small and auditable, like a proof kernel's checker, but not zero. Concrete mitigations: hardened and randomized execution environments, audit-by-resettlement of a random claim fraction, scoring rules that price claim selection (paying for resolution, \S\ref{sec:demos} Result 5), and settlement-lag limits. In our own suites the TCB was subprocess isolation with resource limits; every mined deception was semantic---none attacked the harness---but at higher capability that will not remain true by default, and TCB hardening should be treated as a first-class component of the paradigm.

\section{Conclusion}

The verification gap is the binding constraint on the next major leap in language-model capability. We have made its cost exact---unsound verification pays a polynomial compute penalty $N^{1/\rho^2}$---and demonstrated, in the smallest system that can express them, the collapse of learned verifiers under i.i.d.\ pressure, their capture by adversarial pressure, and the repair mechanism: reality-anchored settlement, which inverted the adversary's exploit, scaled soundness log-linearly with settled labels at over $10\times$ on-policy label efficiency, and sharpened claims rather than blunting them. Proof-carrying cognition is our conjecture for how this mechanism scales to open-ended empirical reasoning. We have not established that conjecture, and no laboratory-scale study could; what we have done is make it measurable---an exchange-rate law that survives its own falsification in repaired form, a soundness metric that prices verifiers under the pressure they will actually face, a settlement rule that preserved real reward under live policy-gradient pressure (even as its registered drift alarm failed its first test---reported as such), and five registered predictions (\S\ref{sec:predictions}) that any frontier laboratory can confirm or refute. The reality-settled benchmark of \S\ref{sec:bench} is the field's single most important near-term investment for the same reason ImageNet was: paradigms are not argued into existence, they are climbed into existence---and this one makes truth, rather than persuasion, the thing that wins.

\section*{Limitations}

The experiments in \S\ref{sec:theory}--\S\ref{sec:replication} are synthetic: DSLs of six and ten tokens, ridge/gradient-boosted/logistic verifiers, and best-of-$N$, hill-climbing, evolutionary, and settlement-aware search are proxies for---not instances of---frontier reasoners, learned reward models, and RL training; a single-CPU suite cannot test frontier-scale claims. The scaled replication (\S\ref{sec:replication}) falsified the quantitative transfer of Proposition~\ref{prop:exchange} outside its Gaussian model class (it is a ceiling, with realized value $\approx0.32$ at measured $\rho\approx0.50$), showed the closed-form penalty $N^{1/\rho^2}$ overstates costs at practical $N$, and showed that absolute Goodhart collapse is domain-dependent: a strong learned verifier kept converting pressure into capability through $N{=}4096$, so ``sufficient pressure inverts any surface verifier'' remains unestablished beyond that range. Anchoring was tested against a settlement-aware search adversary and survived, but not against a learning adversary trained end-to-end against the settlement process; anchoring also narrows rather than closes the capacity gap to a sound verifier (0.28--0.31 vs.\ 0.59--0.68 under adversarial pressure), so verifier expressiveness remains a separate axis, and one registered adversarial cell failed as registered (steepest ascent; diagnosed, exploratorily, as adversary paralysis). Result 5 and H6 show the training dynamics do not intrinsically favor vague claims, but do not test an agent that strategically selects which claims to stake. The real-model suite (\S\ref{sec:real}) covers one model family, two code benchmarks at or below the generator's capability frontier (with heavy saturation: 50\% of MBPP and 89\% of HumanEval problems showed zero gold variance and were excluded by the pre-specified filter), anchoring by in-context learning and a learned reranker rather than RL training, pressure only to $N{=}32$, and a prompt-level adversary that failed its own manipulation check; its adversarial results bound the threat model rather than the defense. Two registered criteria across the program (H2b, RL3c) were themselves mis-specified and are disclosed as such. The single most important untested regime is tasks at the judge's capability frontier under RL-scale optimization pressure. Key unresolved questions for PCC itself include the expressiveness of the claim language relative to latent reasoning, scoring rules that reward resolution without incentivizing vagueness or manipulation, the cost of a versioned world model at training scale, and whether retrodiction transfers to genuinely novel discovery. The framing of the field's trajectory reflects the authors' reading of the 2023--2026 literature and may not represent consensus.

\section*{Ethics Statement}

A system trained to predict reality could act to make reality more predictable in harmful ways; settlement design must score prediction of unmanipulated outcomes, and we identify this as a safety-critical open problem rather than a solved one. Conversely, the paradigm has a favorable alignment property: reward for calibrated truth-tracking structurally penalizes deception of overseers, and the anchor-drift scalar is a deployable oversight signal. Reality-settled benchmarks in sensitive domains (e.g., biology) must be curated to avoid creating dual-use optimization targets. No human subjects or private data are involved.

\bibliographystyle{plainnat}
\bibliography{references}

\begin{thebibliography}{35}
\providecommand{\natexlab}[1]{#1}
\providecommand{\url}[1]{\texttt{#1}}
\expandafter\ifx\csname urlstyle\endcsname\relax
  \providecommand{\doi}[1]{doi: #1}\else
  \providecommand{\doi}{doi: \begingroup \urlstyle{rm}\Url}\fi

\bibitem[{AlphaProof and AlphaGeometry teams, Google
  DeepMind}(2024)]{alphaproof2024}
{AlphaProof and AlphaGeometry teams, Google DeepMind}.
\newblock {AI} achieves silver-medal standard solving {I}nternational
  {M}athematical {O}lympiad problems.
\newblock DeepMind Blog, 2024.

\bibitem[Amodei et~al.(2016)Amodei, Olah, Steinhardt, Christiano, Schulman, and
  Man{\'e}]{amodei2016concrete}
Dario Amodei, Chris Olah, Jacob Steinhardt, Paul Christiano, John Schulman, and
  Dan Man{\'e}.
\newblock Concrete problems in {AI} safety.
\newblock \emph{arXiv preprint arXiv:1606.06565}, 2016.

\bibitem[Austin et~al.(2021)Austin, Odena, Nye, Bosma, Michalewski, Dohan,
  Jiang, Cai, Terry, Le, et~al.]{austin2021program}
Jacob Austin, Augustus Odena, Maxwell Nye, Maarten Bosma, Henryk Michalewski,
  David Dohan, Ellen Jiang, Carrie Cai, Michael Terry, Quoc Le, et~al.
\newblock Program synthesis with large language models.
\newblock \emph{arXiv preprint arXiv:2108.07732}, 2021.

\bibitem[Bowman et~al.(2022)Bowman, Hyun, Perez, Chen, Pettit, Heiner,
  Luko{\v{s}}i{\=u}t{\.e}, Askell, Jones, Chen, et~al.]{bowman2022measuring}
Samuel~R. Bowman, Jeeyoon Hyun, Ethan Perez, Edwin Chen, Craig Pettit, Scott
  Heiner, Kamil{\.e} Luko{\v{s}}i{\=u}t{\.e}, Amanda Askell, Andy Jones, Anna
  Chen, et~al.
\newblock Measuring progress on scalable oversight for large language models.
\newblock \emph{arXiv preprint arXiv:2211.03540}, 2022.

\bibitem[Chen et~al.(2021)Chen, Tworek, Jun, Yuan, de~Oliveira~Pinto, Kaplan,
  Edwards, Burda, Joseph, Brockman, et~al.]{chen2021evaluating}
Mark Chen, Jerry Tworek, Heewoo Jun, Qiming Yuan, Henrique~Ponde
  de~Oliveira~Pinto, Jared Kaplan, Harri Edwards, Yuri Burda, Nicholas Joseph,
  Greg Brockman, et~al.
\newblock Evaluating large language models trained on code.
\newblock \emph{arXiv preprint arXiv:2107.03374}, 2021.

\bibitem[Christiano et~al.(2017)Christiano, Leike, Brown, Martic, Legg, and
  Amodei]{christiano2017deep}
Paul~F. Christiano, Jan Leike, Tom Brown, Miljan Martic, Shane Legg, and Dario
  Amodei.
\newblock Deep reinforcement learning from human preferences.
\newblock In \emph{Advances in Neural Information Processing Systems},
  volume~30, 2017.

\bibitem[Cobbe et~al.(2021)Cobbe, Kosaraju, Bavarian, Chen, Jun, Kaiser,
  Plappert, Tworek, Hilton, Nakano, Hesse, and Schulman]{cobbe2021training}
Karl Cobbe, Vineet Kosaraju, Mohammad Bavarian, Mark Chen, Heewoo Jun, Lukasz
  Kaiser, Matthias Plappert, Jerry Tworek, Jacob Hilton, Reiichiro Nakano,
  Christopher Hesse, and John Schulman.
\newblock Training verifiers to solve math word problems.
\newblock \emph{arXiv preprint arXiv:2110.14168}, 2021.

\bibitem[Deng et~al.(2009)Deng, Dong, Socher, Li, Li, and
  Fei-Fei]{deng2009imagenet}
Jia Deng, Wei Dong, Richard Socher, Li-Jia Li, Kai Li, and Li~Fei-Fei.
\newblock {ImageNet}: A large-scale hierarchical image database.
\newblock In \emph{IEEE Conference on Computer Vision and Pattern Recognition},
  2009.

\bibitem[Gao et~al.(2023)Gao, Schulman, and Hilton]{gao2023scaling}
Leo Gao, John Schulman, and Jacob Hilton.
\newblock Scaling laws for reward model overoptimization.
\newblock In \emph{International Conference on Machine Learning}, 2023.

\bibitem[Garrabrant et~al.(2016)Garrabrant, Benson-Tilsen, Critch, Soares, and
  Taylor]{garrabrant2016logical}
Scott Garrabrant, Tsvi Benson-Tilsen, Andrew Critch, Nate Soares, and Jessica
  Taylor.
\newblock Logical induction.
\newblock \emph{arXiv preprint arXiv:1609.03543}, 2016.

\bibitem[Gneiting and Raftery(2007)]{gneiting2007strictly}
Tilmann Gneiting and Adrian~E. Raftery.
\newblock Strictly proper scoring rules, prediction, and estimation.
\newblock \emph{Journal of the American Statistical Association}, 102\penalty0
  (477):\penalty0 359--378, 2007.

\bibitem[Guo et~al.(2025)Guo, Yang, Zhang, Song, Zhang, Xu, Zhu, Ma, Wang, Bi,
  et~al.]{guo2025deepseek}
Daya Guo, Dejian Yang, Haowei Zhang, Junxiao Song, Ruoyu Zhang, Runxin Xu,
  Qihao Zhu, Shirong Ma, Peiyi Wang, Xiao Bi, et~al.
\newblock {DeepSeek-R1}: Incentivizing reasoning capability in {LLMs} via
  reinforcement learning.
\newblock \emph{arXiv preprint arXiv:2501.12948}, 2025.

\bibitem[Ha and Schmidhuber(2018)]{ha2018recurrent}
David Ha and J{\"u}rgen Schmidhuber.
\newblock Recurrent world models facilitate policy evolution.
\newblock In \emph{Advances in Neural Information Processing Systems},
  volume~31, 2018.

\bibitem[Halawi et~al.(2024)Halawi, Zhang, Yueh-Han, and
  Steinhardt]{halawi2024approaching}
Danny Halawi, Fred Zhang, Chen Yueh-Han, and Jacob Steinhardt.
\newblock Approaching human-level forecasting with language models.
\newblock \emph{arXiv preprint arXiv:2402.18563}, 2024.

\bibitem[Hendrycks et~al.(2021)Hendrycks, Burns, Kadavath, Arora, Basart, Tang,
  Song, and Steinhardt]{hendrycks2021measuring}
Dan Hendrycks, Collin Burns, Saurav Kadavath, Akul Arora, Steven Basart, Eric
  Tang, Dawn Song, and Jacob Steinhardt.
\newblock Measuring mathematical problem solving with the {MATH} dataset.
\newblock In \emph{NeurIPS Datasets and Benchmarks Track}, 2021.

\bibitem[Irving et~al.(2018)Irving, Christiano, and Amodei]{irving2018ai}
Geoffrey Irving, Paul Christiano, and Dario Amodei.
\newblock {AI} safety via debate.
\newblock \emph{arXiv preprint arXiv:1805.00899}, 2018.

\bibitem[Kaplan et~al.(2020)Kaplan, McCandlish, Henighan, Brown, Chess, Child,
  Gray, Radford, Wu, and Amodei]{kaplan2020scaling}
Jared Kaplan, Sam McCandlish, Tom Henighan, Tom~B. Brown, Benjamin Chess, Rewon
  Child, Scott Gray, Alec Radford, Jeffrey Wu, and Dario Amodei.
\newblock Scaling laws for neural language models.
\newblock \emph{arXiv preprint arXiv:2001.08361}, 2020.

\bibitem[Kirchner et~al.(2024)Kirchner, Chen, Edwards, Leike, McAleese, and
  Burda]{kirchner2024prover}
Jan~Hendrik Kirchner, Yining Chen, Harri Edwards, Jan Leike, Nat McAleese, and
  Yuri Burda.
\newblock Prover-verifier games improve legibility of {LLM} outputs.
\newblock \emph{arXiv preprint arXiv:2407.13692}, 2024.

\bibitem[LeCun(2022)]{lecun2022path}
Yann LeCun.
\newblock A path towards autonomous machine intelligence.
\newblock OpenReview preprint, 2022.

\bibitem[Leike et~al.(2018)Leike, Krueger, Everitt, Martic, Maini, and
  Legg]{leike2018scalable}
Jan Leike, David Krueger, Tom Everitt, Miljan Martic, Vishal Maini, and Shane
  Legg.
\newblock Scalable agent alignment via reward modeling: a research direction.
\newblock \emph{arXiv preprint arXiv:1811.07871}, 2018.

\bibitem[Lightman et~al.(2024)Lightman, Kosaraju, Burda, Edwards, Baker, Lee,
  Leike, Schulman, Sutskever, and Cobbe]{lightman2024lets}
Hunter Lightman, Vineet Kosaraju, Yuri Burda, Harrison Edwards, Bowen Baker,
  Teddy Lee, Jan Leike, John Schulman, Ilya Sutskever, and Karl Cobbe.
\newblock Let's verify step by step.
\newblock In \emph{International Conference on Learning Representations}, 2024.

\bibitem[{OpenAI}(2024)]{openai2024learning}
{OpenAI}.
\newblock Learning to reason with {LLMs}.
\newblock OpenAI Blog, 2024.

\bibitem[Ouyang et~al.(2022)Ouyang, Wu, Jiang, Almeida, Wainwright, Mishkin,
  Zhang, Agarwal, Slama, Ray, et~al.]{ouyang2022training}
Long Ouyang, Jeffrey Wu, Xu~Jiang, Diogo Almeida, Carroll Wainwright, Pamela
  Mishkin, Chong Zhang, Sandhini Agarwal, Katarina Slama, Alex Ray, et~al.
\newblock Training language models to follow instructions with human feedback.
\newblock In \emph{Advances in Neural Information Processing Systems},
  volume~35, 2022.

\bibitem[Rumelhart et~al.(1986)Rumelhart, Hinton, and
  Williams]{rumelhart1986learning}
David~E. Rumelhart, Geoffrey~E. Hinton, and Ronald~J. Williams.
\newblock Learning representations by back-propagating errors.
\newblock \emph{Nature}, 323\penalty0 (6088):\penalty0 533--536, 1986.

\bibitem[Savage(1971)]{savage1971elicitation}
Leonard~J. Savage.
\newblock Elicitation of personal probabilities and expectations.
\newblock \emph{Journal of the American Statistical Association}, 66\penalty0
  (336):\penalty0 783--801, 1971.

\bibitem[Silver et~al.(2017)Silver, Schrittwieser, Simonyan, Antonoglou, Huang,
  Guez, Hubert, Baker, Lai, Bolton, et~al.]{silver2017mastering}
David Silver, Julian Schrittwieser, Karen Simonyan, Ioannis Antonoglou, Aja
  Huang, Arthur Guez, Thomas Hubert, Lucas Baker, Matthew Lai, Adrian Bolton,
  et~al.
\newblock Mastering the game of {Go} without human knowledge.
\newblock \emph{Nature}, 550\penalty0 (7676):\penalty0 354--359, 2017.

\bibitem[Snell et~al.(2024)Snell, Lee, Xu, and Kumar]{snell2024scaling}
Charlie Snell, Jaehoon Lee, Kelvin Xu, and Aviral Kumar.
\newblock Scaling {LLM} test-time compute optimally can be more effective than
  scaling model parameters.
\newblock \emph{arXiv preprint arXiv:2408.03314}, 2024.

\bibitem[Strathern(1997)]{strathern1997improving}
Marilyn Strathern.
\newblock `improving ratings': audit in the {B}ritish university system.
\newblock \emph{European Review}, 5\penalty0 (3):\penalty0 305--321, 1997.

\bibitem[Sutton(2019)]{sutton2019bitter}
Richard~S. Sutton.
\newblock The bitter lesson.
\newblock Incomplete Ideas (blog), 2019.

\bibitem[Trinh et~al.(2024)Trinh, Wu, Le, He, and Luong]{trinh2024solving}
Trieu~H. Trinh, Yuhuai Wu, Quoc~V. Le, He~He, and Thang Luong.
\newblock Solving olympiad geometry without human demonstrations.
\newblock \emph{Nature}, 625\penalty0 (7995):\penalty0 476--482, 2024.

\bibitem[Uesato et~al.(2022)Uesato, Kushman, Kumar, Song, Siegel, Wang,
  Creswell, Irving, and Higgins]{uesato2022solving}
Jonathan Uesato, Nate Kushman, Ramana Kumar, Francis Song, Noah Siegel, Lisa
  Wang, Antonia Creswell, Geoffrey Irving, and Irina Higgins.
\newblock Solving math word problems with process- and outcome-based feedback.
\newblock \emph{arXiv preprint arXiv:2211.14275}, 2022.

\bibitem[Vaswani et~al.(2017)Vaswani, Shazeer, Parmar, Uszkoreit, Jones, Gomez,
  Kaiser, and Polosukhin]{vaswani2017attention}
Ashish Vaswani, Noam Shazeer, Niki Parmar, Jakob Uszkoreit, Llion Jones,
  Aidan~N. Gomez, Lukasz Kaiser, and Illia Polosukhin.
\newblock Attention is all you need.
\newblock In \emph{Advances in Neural Information Processing Systems},
  volume~30, 2017.

\bibitem[Wei et~al.(2022)Wei, Wang, Schuurmans, Bosma, Ichter, Xia, Chi, Le,
  and Zhou]{wei2022chain}
Jason Wei, Xuezhi Wang, Dale Schuurmans, Maarten Bosma, Brian Ichter, Fei Xia,
  Ed~Chi, Quoc~V. Le, and Denny Zhou.
\newblock Chain-of-thought prompting elicits reasoning in large language
  models.
\newblock In \emph{Advances in Neural Information Processing Systems},
  volume~35, 2022.

\bibitem[Wu et~al.(2022)Wu, Jiang, Li, Rabe, Staats, Jamnik, and
  Szegedy]{wu2022autoformalization}
Yuhuai Wu, Albert~Qiaochu Jiang, Wenda Li, Markus~N. Rabe, Charles Staats,
  Mateja Jamnik, and Christian Szegedy.
\newblock Autoformalization with large language models.
\newblock In \emph{Advances in Neural Information Processing Systems},
  volume~35, 2022.

\bibitem[Zelikman et~al.(2022)Zelikman, Wu, Mu, and Goodman]{zelikman2022star}
Eric Zelikman, Yuhuai Wu, Jesse Mu, and Noah Goodman.
\newblock {STaR}: Bootstrapping reasoning with reasoning.
\newblock In \emph{Advances in Neural Information Processing Systems},
  volume~35, 2022.

\end{thebibliography}

\appendix

\section{Experimental Details}
\label{app:details}

\bfpara{The DSL world.} Programs are strings of length 1--6 over six tokens acting on an integer register initialized to the input $x$: \texttt{A} ($v{+}1$), \texttt{S} ($v{-}1$), \texttt{D} ($v{\times}2$), \texttt{T} ($v{\times}3$), \texttt{N} ($-v$), \texttt{Q} ($v^2$), with executions aborted if $|v|>10^6$. A task is drawn by sampling a hidden target program of length 4 whose outputs on the evaluation inputs $x\in\{-3,\dots,4\}$ are all defined and take more than two distinct values. Gold reward of a candidate is the fraction of the eight input--output pairs it matches under execution. Random programs average gold reward 0.079; multiple functionally equivalent solutions typically exist.

\bfpara{Verifiers.} The learned verifier is ridge regression ($\alpha{=}1$) over 43 features: 6 token counts, 36 bigram counts, and length---a deliberately shallow judge evaluating the surface of a program rather than executing it. Experiment 6 uses logistic regression ($C{=}1$, predictions clipped to $[0.001,0.999]$; a clipped constant predictor when training labels are degenerate).

\bfpara{Experiment 1.} 20,000 trials per $(\rho,N)$ point; $N\in\{2^0,\dots,2^{12}\}$; $\rho\in\{0.2,0.5,0.8,1.0\}$. Theory lines use Monte-Carlo $\E[\max_N]$ ($2\times10^5$ trials), not the asymptotic formula.

\bfpara{Experiment 2.} 60 tasks; learned verifier trained on 1,000 labeled random programs per task; candidate pools of 2,048 fresh programs; $N\in\{2^0,\dots,2^{11}\}$ subsampled without replacement per point.

\bfpara{Experiment 3.} 60 tasks, 10 rounds; pool 512, settle top 32 by proxy per round; anchored condition appends settled labels and refits, frozen does not; both start from 300 base labels.

\bfpara{Experiment 4.} 40 tasks, 250 steps of steepest-ascent search: 16 mutations (substitute/insert/delete a token) per step, moving to the best-scoring candidate when it does not decrease the score. Anchored condition settles up to 25 recently visited programs every 25 steps (250 reality queries per run) and refits. The sound condition scores candidates by execution directly.

\bfpara{Experiment 5.} 40 tasks; base verifier trained on 500 random labels; budgets $S\in\{0,50,150,500,1500\}$. On-policy acquisition alternates pools of 256, settling the top 16 by the current verifier and refitting; the random condition adds $S$ labels of random programs at once. Soundness AUC is the mean of $\Snd@N$ over $N\in\{2^0,\dots,2^{11}\}$ on a fresh pool of 2,048.

\bfpara{Experiment 6.} 60 tasks, 12 rounds; claim event ``gold reward $\ge0.25$''; pools of 256 per round, mixed acquisition (16 top-probability + 16 random), 32 claims settled per round (1,920 per round pooled across tasks); 200 initial random labels per task. Brier decomposition uses 10 equal-width probability bins; reliability and resolution are computed on each round's settled claims \emph{before} that round's refit, so every point is out-of-sample.

\bfpara{Compute and seeds.} All experiments complete in under ten minutes total on a single CPU. Seeds: 0--4 (suite 1), 10--12 (suite 2), fixed in the scripts.

\section{Scaled Replication Details}
\label{app:scaled}

\bfpara{RichDSL.} Programs of length 1--10 over ten tokens acting on an integer register $v$ initialized to input $x$: \texttt{A} ($v{+}1$), \texttt{S} ($v{-}1$), \texttt{D} ($v{\times}2$), \texttt{T} ($v{\times}3$), \texttt{H} ($v\ \mathrm{div}\ 2$, truncating), \texttt{N} ($-v$), \texttt{Q} ($v^2$), \texttt{P} ($v{+}x$), \texttt{M} ($v \bmod 7$), \texttt{G} ($\max(v,0)$). Values with $|v|>10^9$ become invalid (scored as mismatch). Inputs $x\in\{-8,\dots,7\}$; targets are length-6 programs valid on all inputs with more than three distinct outputs. Random programs average gold reward 0.018. Statistical unit is the task; tests are paired two-sided Wilcoxon signed-rank unless a direction was pre-registered; intervals are 95\% bootstrap CIs.

\bfpara{Verifiers.} Weak: ridge ($\alpha{=}1$) on 111 features (unigrams, bigrams, length). Strong: gradient-boosted trees (\texttt{HistGradientBoostingRegressor}, 100 iterations; 60 in the H3 loop) on 359 features (adding positional token indicators, 128 hashed trigrams, first/last one-hots). Both trained on 2,000 labeled random programs per task (600 base labels in the loop experiments).

\bfpara{H1.} Gaussian simulation: 100,000 trials per point, $N\in\{2^0,\dots,2^{14}\}$, $\rho\in\{0.1,0.3,0.5,0.7,0.9,1.0\}$; the penalty sweep uses 20,000 trials to $N{=}2^{17}$. Transfer (H1c): per-task Pearson $\rho(\text{proxy},\text{gold})$ on fresh pools of 4,096 vs.\ realized $\Snd@4096$, 120 tasks.

\bfpara{H2.} 120 tasks; pools of 4,096; $N\in\{2^0,\dots,2^{12}\}$, eight subsample repetitions per point.

\bfpara{H3.} 12 rounds; pool 1,024/round; settle the top 32 by proxy; anchored refits on all settled labels; 120 tasks (weak), 60 (strong).

\bfpara{H4.} 300 steps, 24 mutations (substitute/insert/delete) per step, 100 tasks; anchored settlement of the 25 most recent unique visited programs every 25 steps (300 reality queries/run). Steepest ascent moves to the best mutation when not worse; the evolutionary adversary keeps the top 16 of population-plus-children; the settlement-aware adversary detects refits (score drop without moving) and restarts at the proxy-best of 64 random probes filtered to the 32 most edit-distant from all settled programs. Its population (evolutionary) or the anchored model's scores are recomputed after refits; the steepest climber's stale incumbent score is the registered-failure artifact discussed in \S\ref{sec:replication}.

\bfpara{H5.} Budgets $S\in\{0,100,300,1000,3000\}$ in batches of 20; on-policy settles the top 20 by the current verifier from pools of 512; uncertainty sampling scores pool variance under a 3-member bootstrap-ridge committee; Soundness AUC is the mean of $\Snd@N$ over $N\in\{2^0,\dots,2^{12}\}$ on fresh pools of 4,096; 80 tasks.

\bfpara{H6.} Claim event ``gold $\ge0.125$'' (base rate $\approx0.19$; a pre-run, documented amendment from the minimal suite's 0.25, whose base rate $\approx0.05$ in RichDSL is too degenerate for 10-bin decomposition); 15 rounds; pools of 256; 16 top-probability + 16 random claims settled per round; decomposition computed out-of-sample before each refit; 100 tasks.

\bfpara{Compute and seeds.} $\sim\!75$ minutes on a single CPU. Tasks are independently seeded by index (bases 20,000 / 50,000 / 60,000 / 70,000 / 90,000 / 110,000), so chunked and monolithic invocations produce identical results.

\end{document}